\let\latexarabic\arabic
\let\latexdocument\document
\let\latexenddocument\enddocument
\let\document\latexdocument
\let\enddocument\latexenddocument
\let\arabic\latexarabic
\renewcommand{\algocf@captiontext}[2]{#1\algocf@typo. \AlCapFnt{}#2} 
\def\@algocf@capt@plain{top}
\renewcommand{\algocf@makecaption}[2]{%
  \addtolength{\hsize}{\algomargin}%
  \sbox\@tempboxa{\algocf@captiontext{#1}{#2}}%
  \ifdim\wd\@tempboxa >\hsize
    \hskip .5\algomargin%
    \parbox[t]{\hsize}{\algocf@captiontext{#1}{#2}}
  \else%
    \global\@minipagefalse%
    \hbox to\hsize{\box\@tempboxa}
  \fi%
  \addtolength{\hsize}{-\algomargin}%
}
\newcommand{\R}{\mathbb{R}}
\begin{document}

\jname{Manuscript}





\markboth{E. Tam and D.B. Dunson}{On the Statistical Capacity of Deep Generative Models}

\title{On the Statistical Capacity of Deep Generative Models}

\author{Edric Tam}
\affil{Department of Biomedical Data Science, Stanford University,\\
300 Pasteur Drive, 
Stanford, California 94305, U.S.A. \email{edrictam@stanford.edu}}

\author{\and David B. Dunson}
\affil{Department of Statistical Science and Department of Mathematics, Duke University, \\Box 90251
Durham, North Carolina 27708, U.S.A. \email{dunson@duke.edu}}

\maketitle

\begin{abstract}
Deep generative models are routinely used in generating samples from complex, high-dimensional distributions. 
Despite their apparent successes, their statistical properties are not well understood. A common assumption is that with enough training data and sufficiently large neural networks,
deep generative model samples 
will have arbitrarily small errors in sampling from any continuous target distribution. 
We set up a unifying framework that debunks this belief. We demonstrate that broad classes of deep generative models, including variational autoencoders and generative adversarial networks, are not universal generators. Under the predominant case of Gaussian latent variables, these models can only generate concentrated samples that exhibit light tails. Using tools from concentration of measure and convex geometry, we give analogous results for more general log-concave and strongly log-concave latent variable distributions. We extend our results to diffusion models via a reduction argument. We use the Gromov--Levy inequality to give similar guarantees when the latent variables lie on manifolds with positive Ricci curvature. These results shed light on the limited capacity of common deep generative models to handle heavy tails. We illustrate the empirical relevance of our work with simulations and financial data. 
\end{abstract}

\begin{keywords}
generative adversarial networks, variational autoencoders, diffusion models, manifold hypothesis, concentration of measure, isoperimetric inequalities
\end{keywords}

\section{Introduction}

A fundamental task in statistics is to generate samples $x$ from a target probability distribution $\pi$. When $\pi$ has an explicitly specified density up to normalization, often the case in Bayesian modeling, Markov chain Monte Carlo samplers are the gold standard. However, in modern applications involving complex data such as images and natural language, $\pi$ is often too complicated and high-dimensional to be explicitly stated. Instead, the target distribution $\pi$ is implicitly specified via a collection of independent training samples $\tilde{x}$. Learning to sample from these implicit targets is known as "generative modeling" in the machine learning literature. 

Deep generative models are  related to latent variable models in the probabilistic and Bayesian modeling literature, with deep neural networks used in defining mappings from latent variables to observed data. The core idea is to transform latent variables $z$ with a function $f$ so that the law of $f(z)$ approximates the target $\pi$. Deep neural networks $\hat{f}$, given their immense flexibility, are natural candidates for modeling $f$. 
A variety of loss functions have been proposed for fitting $\hat{f}$, with motivations ranging from adversarial considerations \citep{goodfellow2020generative} to variational inference \citep{kingma2014auto}. To generate approximate samples from 
$\pi$, one simply applies the fitted $\hat{f}$ to realizations of $z$. One can further consider sequentially transforming $z$ using multiple neural networks, as in diffusion models \citep{ho2020denoising}. 

The vast majority of existing work in the deep generative modeling literature impose Gaussian distributions on the latent variables $z$ \citep{rezende2014stochastic, kingma2014auto}. Owing to the status of neural networks as universal function approximators \citep{cybenko1989approximation, barron1993universal, hornik1991approximation}, there is a folklore that deep generative models enjoy similarly rich expressivity \citep{doersch2016tutorial, kingma2019introduction}. It is widely assumed that, given enough training data and sufficiently large neural networks, such transformation-based deep generative models will have arbitrarily small approximation error for any continuous target distribution, even when the latent variable distributions are chosen to be simple \citep{hu2018stein}. 

Our work here debunks this belief. We start by showing that for Gaussian latent variables $z$, the law of $\hat{f}(z) - E\{\hat{f}(z)\}$ is light-tailed. This demonstrates that deep generative models such as generative adversarial networks and variational autoencoders are not universal generators in practice. This also shows that the common practice of defaulting to Gaussian latent variables is not always appropriate. We generalize in several directions. First, we show analogous results for log-concave and strongly log-concave latent variables $z$. Second, we give similar guarantees when the latent variables $z$ lie on a manifold with positive Ricci curvature. Third, we extend our results to denoising diffusion models by using a reduction argument. Many of our results are dimension-free, in the sense that the bounds obtained do not explicitly depend on the dimension of the latent variables. None of our results resort to asymptotic approximations. 

Our work shows that a broad class of common deep generative models are not universal generators. Since the center of the learned distribution of $\hat{f}(z)$ remains completely flexible, it is unsurprising that a typical sample from such deep generative models empirically resembles typical samples from the target distribution. However, due to the light-tailedness of the law of $\hat{f}(z)- E\{\hat{f}(z)\}$, when the target distribution is heavy-tailed, samples from such deep generative models will tend to underestimate the uncertainty and diversity of the true distribution. This has substantial implications for practitioners. For one, deep generative models are commonly adopted in anomaly detection \citep{schlegl2017unsupervised} and finance \citep{eckerli2021generative}, applications where tails play a crucial role. For another, there is an emerging interest in the Bayesian literature in leveraging various generative models for posterior sampling \citep{polson2023generative, winter2024emerging}, a setting in which underestimating uncertainty can lead to incorrect downstream inference. 

\subsection{Related work}
There is a broad literature on deep generative models. See \cite{bond2021deep} for a review. 
There is a common impression that such models are extremely expressive \citep{kingma2019introduction, doersch2016tutorial, hu2018stein}. There is a literature \citep{lu2020universal, yang2022capacity} that offers universal approximation theorems for deep generative models under moment conditions using metrics such as the Wasserstein distance. Research on the theoretical limitations of deep generative models is relatively scarce. It has been observed that variational autoencoders and generative adversarial networks have difficulty modeling multi-modal distributions \citep{salmona2022can}. \cite{wiese2019copula} studies the limitations of certain deep generative models from a tail asymptotics perspective. \cite{oriol2021some} gives limitations of Gaussian generative adversarial networks when the output is one-dimensional. 

\section{Preliminaries}

\subsection{Deep neural networks}

We consider feed-forward neural networks of depth $L$. Given input $z \in \R^d$, define the network via the composition $\hat{f}(z) = h_L[h_{L-1}\{\ldots h_1(z) \ldots\}]$, where $h_l(z) = \sigma_l(W_l z + b_l)$, $\sigma_l$ is a non-linear activation function operating elementwise on the $l$th layer, and $W_l$ and $b_l$ are respectively the weight matrix and bias vector corresponding to the $l$th layer. This setup allows the dimensions of $W_l$, as well as the choice of activation functions, to vary between layers. Let $\text{width}(W_l)$ denote the maximum of the number of rows and columns of 
$W_l$, and $\max_{l = 1}^L \text{width}(W_l)$ denote the width of the neural network. For additional information, see the excellent review by \cite{fan2021selective}.

We use $d$ to denote latent variable dimension and $p$ to denote output dimension. Let $\hat{f}:\R^d \to \R^p$ denote the trained neural network function used for sample generation. The function $\hat{f}$ is Lipschitz if $\sup_{x, y \in \R^d} ||\hat{f}(x) - \hat{f}(y)||_2/||x - y||_2 \leq \mathcal{L}$ for some $\mathcal{L} > 0$, where $||\cdot||_2$ denotes the Euclidean norm.
Letting $\mathcal{S}$ denote the set of all Lipschitz activation functions, $S$ includes common choices in practice \citep{virmaux2018lipschitz}, including the rectified linear unit function $\sigma_{ReLU}(x) = \max(0, x)$, the logistic function $\sigma_{logistic}(x) = \{1+ \exp(-x)\}^{-1}$, the hyperbolic tangent function $\tanh(x)$, and beyond. 
We define finite feed-forward neural networks below.

\begin{definition}[Finite feed-forward neural networks]
    A feed-forward neural network is finite if 
        (1) the depth $L$ is finite, 
        (2) the width $\max_{l = 1}^L \text{width}(W_l)$ is finite, 
        (3) all entries in the matrices $W_{l = 1}^L$ and vectors $b_{l = 1}^L$ are finite, and
        (4) all activation functions $\sigma_{l = 1}^L$ are members of $\mathcal{S}$.
    We denote the set of all finite feed-forward neural networks as $\mathcal{F}$. 
\end{definition}


This notion of finity encompasses most feed-forward neural networks used in practice.

\begin{proposition} \label{finite_lipschitz}
Finite feed-forward neural networks are Lipschitz with respect to the Euclidean norm. 
\end{proposition}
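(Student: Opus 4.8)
The plan is to exploit the compositional structure of $\hat{f}$ together with the elementary fact that a composition of finitely many Lipschitz maps is again Lipschitz, with Lipschitz constant bounded by the product of the individual constants. Writing $\hat{f} = h_L \circ h_{L-1} \circ \cdots \circ h_1$ with $h_l(z) = \sigma_l(W_l z + b_l)$, it suffices to show that each $h_l$ is Lipschitz with a finite constant, and then multiply these constants together.

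First I would treat the affine part $z \mapsto W_l z + b_l$. For any $x, y \in \R^{d_l}$ we have $\|(W_l x + b_l) - (W_l y + b_l)\|_2 = \|W_l(x-y)\|_2 \le \|W_l\|_{\mathrm{op}}\,\|x-y\|_2$, where $\|W_l\|_{\mathrm{op}}$ denotes the Euclidean operator norm. Since $\|W_l\|_{\mathrm{op}} \le \|W_l\|_F$, and $W_l$ has finitely many entries (condition (2) on width) each of which is finite (condition (3)), the Frobenius norm $\|W_l\|_F$ is finite; hence the affine map is $\|W_l\|_F$-Lipschitz.

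Next I would treat the activation $\sigma_l$, which acts elementwise. Because $\sigma_l \in \mathcal{S}$ (condition (4)), there is a finite scalar constant $\mathcal{L}_l$ with $|\sigma_l(s) - \sigma_l(t)| \le \mathcal{L}_l\,|s-t|$ for all $s,t \in \R$. Applying this coordinatewise and summing squares gives $\|\sigma_l(u) - \sigma_l(v)\|_2^2 = \sum_i |\sigma_l(u_i) - \sigma_l(v_i)|^2 \le \mathcal{L}_l^2 \sum_i |u_i - v_i|^2 = \mathcal{L}_l^2\,\|u-v\|_2^2$, so the elementwise activation is $\mathcal{L}_l$-Lipschitz in the Euclidean norm. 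Composing with the affine part, $h_l$ is Lipschitz with constant $\mathcal{L}_l\,\|W_l\|_F$.

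Finally, composing over layers, $\hat{f}$ is Lipschitz with constant $\mathcal{L} = \prod_{l=1}^L \mathcal{L}_l\,\|W_l\|_F$, which is a finite product of finite numbers since $L$ is finite (condition (1)); hence $\mathcal{L} < \infty$. The only point requiring any care is the passage from the scalar Lipschitz bound on each $\sigma_l$ to a Euclidean-norm Lipschitz bound on its elementwise action, and noting that finiteness of the width together with finiteness of the entries is exactly what makes $\|W_l\|_F$ finite; I do not anticipate a genuine obstacle here, as the argument is a routine bookkeeping of Lipschitz constants through a finite composition.
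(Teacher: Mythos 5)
Your proposal is correct and takes essentially the same approach as the paper: decompose $\hat{f}$ into a finite composition of affine maps and elementwise activations, bound the affine part's Lipschitz constant by $\|W_l\|_F$ (finite by the width and entry conditions), and invoke closure of Lipschitz maps under finite composition. You spell out one small step the paper leaves implicit (that an elementwise scalar Lipschitz activation is Lipschitz in Euclidean norm), but the argument is the same.
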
 

\begin{remark}
Many popular neural network operations, such as dropout, pooling and batch normalization, have finite Lipschitz constants. Our results can be extended to a generalized function class that incorporates a finite number of these Lipschitz operations. 
\end{remark}

\subsection{Deep generative modeling}

Consider the following latent variable model. 
\begin{align}
    x_i &= f(z_i) + \epsilon_i,\quad 
    z_i \sim P,\quad 
    \epsilon_i \sim Q, \nonumber 
\end{align}
where $x_i$ is the observed data for sample $i$, which is equal to a function $f$ of a latent variable $z_i$ plus an additive noise $\epsilon_i$. The latent variable distribution $P$ and noise distribution $Q$ are often chosen to be multivariate Gaussian with diagonal covariance.
Linear $f$ leads to classical Gaussian factor models, while using a deep neural net for $f$ provides the foundation of broad classes of deep generative models. In the next section, we give general theoretical results on the law of $\hat{f}(z)$ that hold for any $\hat{f} \in \mathcal{F}$. 

\section{Isoperimetry and Concentration of Deep Generative Models}

The notion of concentration of measure is central to the development below. Related definitions of sub-Gaussian and sub-exponential random vectors are reviewed in the Supplementary Materials. To ease notation, throughout the paper we follow the convention where we use $C, c > 0$ to denote absolute constants whose values are unspecified. We use subscripts like $C_p$ to highlight any dependencies. After training, the fitted neural network $\hat{f} \in \mathcal{F}$ at the generation phase is a fixed function with a finite Lipschitz constant. The output dimension $p$ and latent dimension $d$ are fixed constants here. We do not make any attempts to optimize constant factors in any inequalities below. We use the notation $S^{p - 1}$ to denote the unit $(p-1)$-sphere in $\R^p$. 

We start with a result on deep generative models with Gaussian latent variables, the predominant case in the literature. 

\begin{theorem}[Deep Generative Models with Gaussian Latent Variables] \label{gaussian_iso}
Let $z$ be a Gaussian random vector with mean $\mu$ and covariance $\Sigma$. Let $\hat{f}: \R^d \to \R^p$ be any finite neural network function with Lipschitz constant $\mathcal{L}$. Then for any unit vector $u \in S^{p - 1}$,  
$Pr(|u^T [\hat{f}(z) - E\{\hat{f}(z)\}]| \geq t) \leq 2\exp(-t^2/C_p^2)$
where $C_p^2 = 
 C^2p\mathcal{L}^2||\Sigma||$ and $C >0$.
\end{theorem}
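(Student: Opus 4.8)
The plan is to reduce the claim to the classical Gaussian concentration inequality for Lipschitz functions, which is a consequence of the Gaussian isoperimetric inequality (equivalently, of the Gaussian log-Sobolev inequality via the Herbst argument): if $g:\R^d\to\R$ is $K$-Lipschitz in the Euclidean norm and $w\sim N(0,I_d)$, then $\Pr(|g(w)-E\{g(w)\}|\ge t)\le 2\exp\{-t^2/(2K^2)\}$. Given this black box, everything else is bookkeeping of Lipschitz constants.

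\textbf{Whitening and projection.} Write $z=\mu+\Sigma^{1/2}w$ with $w\sim N(0,I_d)$, where $\Sigma^{1/2}$ is the symmetric positive semidefinite square root; this representation is valid even when $\Sigma$ is singular. Then $\hat f(z)=F(w)$ for $F(w):=\hat f(\mu+\Sigma^{1/2}w)$. By Proposition \ref{finite_lipschitz}, $\hat f$ is Lipschitz with constant $\mathcal L$, and the affine map $w\mapsto\mu+\Sigma^{1/2}w$ has Lipschitz constant equal to the operator norm $||\Sigma^{1/2}||=||\Sigma||^{1/2}$; composing, $F$ is Lipschitz with constant at most $\mathcal L||\Sigma||^{1/2}$. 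Since a Lipschitz function has at most linear growth and Gaussian vectors have finite moments of all orders, $E\{\hat f(z)\}$ is finite, so the centering in the statement is well defined. Now fix $u\in S^{p-1}$ and set $g_u(w):=u^{T}F(w)$. For any $w,w'$, Cauchy--Schwarz and the previous bound give $|g_u(w)-g_u(w')|\le ||F(w)-F(w')||_2\le \mathcal L||\Sigma||^{1/2}||w-w'||_2$, so $g_u:\R^d\to\R$ is $\mathcal L||\Sigma||^{1/2}$-Lipschitz; moreover $E\{g_u(w)\}=u^{T}E\{\hat f(z)\}$, hence $g_u(w)-E\{g_u(w)\}=u^{T}[\hat f(z)-E\{\hat f(z)\}]$.

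\textbf{Conclusion.} Applying the Gaussian concentration inequality to $g_u$ with $K=\mathcal L||\Sigma||^{1/2}$ yields $\Pr(|u^{T}[\hat f(z)-E\{\hat f(z)\}]|\ge t)\le 2\exp\{-t^2/(2\mathcal L^2||\Sigma||)\}$. This is the stated bound with $C_p^2=2\mathcal L^2||\Sigma||$; since $p\ge 1$, it also implies the weaker form $C_p^2=C^2p\mathcal L^2||\Sigma||$ in the theorem, as enlarging $C_p^2$ only loosens the exponential bound. If one instead wishes to route through the coordinates — presumably the origin of the factor $p$ in the statement — apply the same step to each $g_{e_i}$ to see that $\hat f_i(z)-E\{\hat f_i(z)\}$ is sub-Gaussian with parameter of order $\mathcal L^2||\Sigma||$, then combine via the triangle inequality for the sub-Gaussian norm together with $||u||_1\le\sqrt p\,||u||_2=\sqrt p$.

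I do not expect a genuine obstacle here: the argument is essentially a two-line reduction once the Gaussian Lipschitz concentration inequality is invoked. The only points requiring care are (i) the identity $||\Sigma^{1/2}||=||\Sigma||^{1/2}$ (more generally $||A||^2=||AA^{T}||=||\Sigma||$ for any $A$ with $AA^{T}=\Sigma$) and the fact that the whitening representation is legitimate for arbitrary, possibly degenerate, $\Sigma$; and (ii) citing the correct concentration result — the Lipschitz-function form, not the Borell--TIS inequality for suprema — and checking finiteness of $E\{\hat f(z)\}$ so the centering makes sense. The same template should transfer to the later log-concave, strongly log-concave, and positively curved manifold settings by substituting the corresponding isoperimetric or concentration inequality for the Gaussian one.
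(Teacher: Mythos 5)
Your proof is correct, but it takes a genuinely different — and in fact sharper — route than the paper's. The paper decomposes $\hat f$ into its $p$ scalar coordinate functions $\hat f_1,\dots,\hat f_p$, applies Gaussian Lipschitz concentration to each to get a $\psi_2$-norm bound of order $\mathcal L\|\Sigma\|^{1/2}$ per coordinate, and then recombines via a lemma that bounds $\|u^{T}x\|_{\psi_2}$ by $\sup_i\|x_i\|_{\psi_2}\cdot\|u\|_1\le\sqrt p\sup_i\|x_i\|_{\psi_2}$; that $\ell_1$-to-$\ell_2$ step is where the factor $p$ in $C_p^2=C^2p\mathcal L^2\|\Sigma\|$ enters. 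You instead observe that for fixed $u\in S^{p-1}$ the scalar map $z\mapsto u^{T}\hat f(z)$ is already $\mathcal L$-Lipschitz by Cauchy--Schwarz, whiten $z=\mu+\Sigma^{1/2}w$, and apply the one-dimensional Gaussian concentration inequality directly, yielding $C_p^2=2\mathcal L^2\|\Sigma\|$ with no $p$-dependence at all. Your argument is both shorter and strictly stronger: it removes the spurious factor of $p$ and makes the theorem genuinely dimension-free in the output dimension as well as the latent dimension, which is of some consequence given that the paper repeatedly emphasizes dimension-free concentration as a selling point. You correctly flag, at the end, that routing through coordinates is presumably how the paper obtained the $p$; that is exactly right, and your direct projection argument shows it was unnecessary. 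The care you take on $\|\Sigma^{1/2}\|=\|\Sigma\|^{1/2}$, degenerate $\Sigma$, and finiteness of the mean is all sound and is glossed over in the paper's own proof.
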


The above theorem, which relies on the well-known Gaussian isoperimetric inequality, implies that $\hat{f}(z) - E\{\hat{f}(z)\}$ is sub-Gaussian. Since sub-Gaussian distributions have light tails, they are inappropriate for modeling heavy-tailed distributions. Since this result can be applied to any member of $\mathcal{F}$, this limitation cannot be overcome by increasing training data or enlarging the neural network. Since we are chiefly interested in the tail behavior of the generated samples, rather than the location of the mean, this centred quantity is appropriate for our context. The mean $E\{\hat{f}(z)\}$ in the above result can be
replaced by the median with only changes to universal constants \citep{wainwright2019high}.  

While the Gaussian latent variables case is the most prevalent, a variety of alternative easy-to-sample  latent variable distributions have been considered. We give analogous theoretical results on log-concave latent variables. Log-concave distributions are a broad family that include the important case of uniform distributions on any convex body, such as the hypercube and hyperball. 

\begin{theorem}[Deep Generative Models with  Log-concave latent variables]\label{logconcave_iso}
Let $z \in \R^d$ be a log-concave random vector with covariance $\Sigma$. Let $\hat{f}: \R^d \to \R^p$ be any finite neural network with Lipschitz constant $\mathcal{L}$. Then for any $u \in S^{p-1}$ and $t \geq 0$, we have
$$\Pr(|u^T[\hat{f}(z) - E\{\hat{f}(z)\}]| \geq t) \leq 2\exp(-t/C_p)$$
for $C_p = C\sqrt{p}\mathcal{L}||\Sigma^{1/2}||/\Psi_z$, where $\Psi_z$ is the Cheeger's constant of the density of $z$ and $C > 0$. 
\end{theorem}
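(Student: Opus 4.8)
The plan is to follow the proof of Theorem~\ref{gaussian_iso} almost verbatim, replacing the Gaussian isoperimetric inequality by the classical implication that a Cheeger-type isoperimetric inequality forces exponential concentration of Lipschitz functions. First I would reduce the $p$-dimensional claim to $p$ scalar ones. Fix $u \in S^{p-1}$. By Proposition~\ref{finite_lipschitz} the map $\hat f$ has a finite Lipschitz constant $\mathcal{L}$, so each coordinate $\hat f_j$ --- the composition of $\hat f$ with the $1$-Lipschitz projection onto the $j$th coordinate --- is $\mathcal{L}$-Lipschitz, and by linearity $u^T[\hat f(z) - E\{\hat f(z)\}] = \sum_{j=1}^p u_j\{\hat f_j(z) - E\hat f_j(z)\}$. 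It therefore suffices to control each summand $\hat f_j(z) - E\hat f_j(z)$ and then recombine the $p$ of them.

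For the per-coordinate step I would standardize, writing $z = \mu + \Sigma^{1/2}\tilde z$, so that $\tilde z$ is isotropic and log-concave (an affine image of a log-concave law is log-concave) and $\tilde f_j(\,\cdot\,) := \hat f_j(\mu + \Sigma^{1/2}\,\cdot\,)$ is Lipschitz with constant at most $\mathcal{L}\,\|\Sigma^{1/2}\|$. Log-concavity guarantees that the density of $\tilde z$ has a strictly positive Cheeger constant, which I take to be the $\Psi_z$ of the statement. The key input is the standard fact that if a probability measure has Cheeger constant $\Psi$, then every $K$-Lipschitz real function $g$ satisfies $\Pr(|g - \mathrm{med}(g)| \ge s) \le 2\exp(-c\Psi s / K)$, with the median replaceable by the mean at the cost of a universal constant (the Gromov--Milman / Bobkov--Houdr\'e argument, via the co-area formula and a Gronwall bound on the concentration profile). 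Applying this to $\tilde f_j$ with $K = \mathcal{L}\,\|\Sigma^{1/2}\|$ shows $\hat f_j(z) - E\hat f_j(z)$ is sub-exponential, with $\psi_1$-parameter at most a constant multiple of $\mathcal{L}\,\|\Sigma^{1/2}\|/\Psi_z$.

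It remains to recombine. By the triangle inequality for the $\psi_1$-norm together with Cauchy--Schwarz, $\|u\|_1 \le \sqrt p\,\|u\|_2 = \sqrt p$, so $\sum_{j=1}^p u_j\{\hat f_j(z) - E\hat f_j(z)\}$ is sub-exponential with $\psi_1$-parameter at most $C\sqrt p\,\mathcal{L}\,\|\Sigma^{1/2}\|/\Psi_z =: C_p$, which gives exactly the tail bound $2\exp(-t/C_p)$; since $u \in S^{p-1}$ was arbitrary this holds uniformly over the sphere, paralleling Theorem~\ref{gaussian_iso}. The delicate point is the per-coordinate step: one must make the Cheeger-to-concentration implication quantitative with the asserted dependence on $\Psi_z$, deal with the mean-versus-median centering, and be careful about the role of the covariance --- if $\Psi_z$ denotes the Cheeger constant of $z$ itself then the factor $\|\Sigma^{1/2}\|$ is a harmless over-estimate, whereas if it denotes that of the standardized density then $\|\Sigma^{1/2}\|$ is precisely the scale introduced by the reduction, so one convention should be fixed. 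The $\sqrt p$ in $C_p$ is an artifact of the lossy bound $\|u\|_1 \le \sqrt p$; noting instead that $x \mapsto u^T\hat f(x)$ is itself $\mathcal{L}$-Lipschitz would remove it, but we keep the cruder route for symmetry with the Gaussian result.
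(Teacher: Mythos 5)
Your proposal follows the paper's own argument essentially step for step: decompose $\hat f$ into its $p$ coordinate functions (each $\mathcal{L}$-Lipschitz via the projection lemma), apply the Cheeger-constant/log-concave Lipschitz concentration result to each coordinate after standardizing by $\Sigma^{1/2}$, and recombine via the $\psi_1$-triangle inequality together with $\|u\|_1 \le \sqrt{p}\,\|u\|_2$. Your two side remarks — that the $\sqrt{p}$ factor could be avoided by applying the scalar concentration result directly to $x \mapsto u^T\hat f(x)$, and that the statement should fix whether $\Psi_z$ refers to the Cheeger constant of $z$ or of its isotropic standardization (the paper's quoted lemma is for the isotropic case) — are both correct and worth noting, but they refine rather than change the route.
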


The above theorem implies that $|\hat{f}(z) - E\{\hat{f}(z)\}|$ is a sub-exponential random vector, which means it is also light-tailed, albeit less so than a sub-Gaussian. Theorem \ref{logconcave_iso} leverages tools from high-dimensional geometry \citep{lee2018kannan, gromov1983topological}. Notably, recent progress in the area \citep{chen2021almost, jambulapati2022slightly, klartag2022bourgain} demonstrates that the Cheeger's constant involved in the above upper bound can be replaced by a poly-logarithmic factor of the input dimension. 

Further variations, such as exponential-tilted Gaussian latent variables, have been proposed in the literature for applications such as out-of-distribution detection \citep{floto2023tilted}. These kinds of latent variables are strongly log-concave, for which sub-Gaussian bounds are available. 

\begin{theorem}
[Strongly Log-concave Lipschitz concentration]\label{strong_logconcave_iso}
Let $z$ be a $\gamma$-strongly log-concave random vector with covariance $\Sigma$. Let $\hat{f}: \R^d \to \R^p$ be any finite neural network with Lipschitz constant $\mathcal{L}$. Then for any unit vector $u \in S^{p - 1}$ we have 
$Pr(|u^T [\hat{f}(z) - E\{\hat{f}(z)\}]| \geq t] \leq 2\exp(-t^2/C_{p, \gamma}^2)$ where 
$C_{p, \gamma}^2 = C^2p\mathcal{L}^2||\Sigma||/\gamma$ and $C >0$.
\end{theorem}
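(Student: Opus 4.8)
The plan is to follow exactly the template of Theorem~\ref{gaussian_iso}, replacing the Gaussian isoperimetric inequality by the concentration guarantee that strongly log-concave measures enjoy. The engine here is the Bakry--\'Emery criterion: a $\gamma$-strongly log-concave law on $\R^d$ (density proportional to $e^{-V}$ with $\nabla^2 V \succeq \gamma I$) satisfies a logarithmic Sobolev inequality with constant of order $1/\gamma$, and, via the Herbst argument, this upgrades to sub-Gaussian concentration for \emph{any} Lipschitz function: if $g$ is $\mathcal{L}'$-Lipschitz then $\Pr(|g(z) - E\{g(z)\}| \geq s) \leq 2\exp(-c\gamma s^2/\mathcal{L}'^2)$. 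An equivalent route, parallel to Theorem~\ref{gaussian_iso} even more literally, is Caffarelli's contraction theorem: a $\gamma$-strongly log-concave $z$ is the pushforward of a standard Gaussian $g$ under a $\gamma^{-1/2}$-Lipschitz transport map $T$, so that $\hat f(z)$ and $(\hat f \circ T)(g)$ have the same law, $\hat f \circ T$ is again a finite-Lipschitz map (with constant $\mathcal{L}\gamma^{-1/2}$ after accounting for the covariance scale), and one simply invokes Theorem~\ref{gaussian_iso}.

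Concretely, I would proceed as follows. First, observe that for every unit vector $u$ the scalar map $g(z) := u^T \hat f(z)$ is $\mathcal{L}$-Lipschitz, since $|u^T\hat f(x) - u^T\hat f(y)| \leq \|\hat f(x) - \hat f(y)\|_2 \leq \mathcal{L}\|x-y\|_2$ by Cauchy--Schwarz and Proposition~\ref{finite_lipschitz}; likewise each coordinate $\hat f_i$ and the map $x \mapsto \|\hat f(x) - a\|_2$ are $\mathcal{L}$-Lipschitz. Second, feed $g$ into the strongly log-concave concentration inequality above. Third, track how $\|\Sigma\|$ and $p$ enter: the covariance scale $\|\Sigma\|$ appears through the scale of the latent law exactly as in Theorems~\ref{gaussian_iso} and~\ref{logconcave_iso} (for a $\gamma$-strongly log-concave $z$ the relevant Poincar\'e/log-Sobolev constant is pinched between $\|\Sigma\|$ and $1/\gamma$), while the factor $p$ enters when one passes from the coordinatewise sub-Gaussian bounds to the statement about the one-dimensional projections of the vector $\hat f(z) - E\{\hat f(z)\}$, via the triangle inequality for the sub-Gaussian norm together with $\sum_i |u_i| \leq \sqrt{p}\,\|u\|_2$. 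Collecting terms produces a sub-Gaussian parameter $C_{p,\gamma}^2 = C^2 p\mathcal{L}^2\|\Sigma\|/\gamma$ and hence the stated bound; the median-versus-mean interchange costs only universal constants, as in the remark following Theorem~\ref{gaussian_iso}.

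The step I expect to be the main obstacle is none of the geometry but rather pinning down the exact dependence on $(\gamma,\Sigma,p)$ in the constant. The clean input from Bakry--\'Emery is a log-Sobolev (equivalently sub-Gaussian) constant of size $1/\gamma$; one then has to check that combining this with the covariance bound $\Sigma \preceq \gamma^{-1}I$ and the standardization $z = \mu + \Sigma^{1/2}w$ genuinely yields the $\|\Sigma\|/\gamma$ scaling advertised, and that the dimension $p$ propagates correctly from the coordinatewise estimates to the projected estimate. Everything else is bookkeeping: verifying that membership in $\mathcal{F}$ (hence the finite Lipschitz constant) is preserved under composition with the affine standardization or the Caffarelli map, and assembling the one-sided tail bounds into the two-sided inequality. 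In short, the argument is a quantitative recasting of the proof of Theorem~\ref{gaussian_iso}, with Gaussian isoperimetry swapped for the Bakry--\'Emery/Caffarelli machinery available under strong log-concavity.
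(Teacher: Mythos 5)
Your overall plan is the same as the paper's: a Lipschitz-concentration inequality for $\gamma$-strongly log-concave laws (the paper cites Theorem~3.16 of \citet{wainwright2019high}, which is itself proved via the Bakry--\'Emery/Herbst machinery you name, so your engine is the right one; Caffarelli's contraction map would also do), followed by a reduction from vector-valued $\hat f$ to scalar functionals, followed by bookkeeping on the constants. The paper's specific route is the coordinatewise one: split $\hat f$ into its $p$ components $\hat f_1,\dots,\hat f_p$, show each is $\mathcal L$-Lipschitz via Lemma~\ref{projection_lipschitz}, apply the strongly log-concave concentration to each to get $\|\hat f_i(z)-E\{\hat f_i(z)\}\|_{\psi_2}\lesssim \mathcal L\|\Sigma\|^{1/2}/\sqrt{\gamma}$, and then combine via Lemma~\ref{subGaussian_components} (triangle inequality for $\|\cdot\|_{\psi_2}$ together with $\|u\|_1\le\sqrt{p}\,\|u\|_2$), which is precisely where the $\sqrt p$ appears. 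So the part of your answer that correctly reproduces the paper's argument is the third step, not the first.

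The one place your write-up is internally inconsistent is the relationship between your ``first'' and ``third'' steps. If you follow your first step literally and feed the scalar map $g(z)=u^T\hat f(z)$, which is already $\mathcal L$-Lipschitz, directly into the strongly log-concave concentration inequality, you get a bound of the form $\Pr(|u^T[\hat f(z)-E\{\hat f(z)\}]|\ge t)\le 2\exp(-c\gamma t^2/\mathcal L^2)$ with \emph{no} factor of $p$ at all; the coordinatewise pass and the $\|u\|_1\le\sqrt{p}\,\|u\|_2$ trick never occur. That is in fact a cleaner and dimension-free-in-$p$ bound, strictly stronger than the paper's, but it is not the paper's proof, and you should not simultaneously claim both that you apply the inequality to $g$ directly and that $p$ enters through coordinatewise estimates --- pick one. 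Your instinct about the $\|\Sigma\|$ factor is also worth noting: the cited concentration theorem already controls everything through $\gamma$, and since a $\gamma$-strongly log-concave law satisfies $\Sigma\preceq\gamma^{-1}I$, the appearance of both $\|\Sigma\|$ and $1/\gamma$ in the final constant is partly redundant; the paper's proof inserts $\|\Sigma\|^{1/2}$ into the per-coordinate Orlicz bound without a separate standardization step, which you would need to justify if you want to reproduce exactly that form of the constant.
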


This result again shows that $\hat{f}(z) - E\{\hat{f}(z)\}$ is a sub-Gaussian random vector. The above bounds for Gaussian and strongly log-concave latent variables do not explicitly depend on latent variable dimension $d$. This phenomenon is known as dimension-free concentration in the probability literature. In the case of log-concave latent variables, the bound's dependence on $d$ is poly-logarithmic. If a mathematical conjecture known as the Kannan--Lov\'asz--Simonovits conjecture is true
\citep{lee2018kannan}, even this small poly-logarithmic dependence on $d$ can be removed. 

\subsection{Manifold Setting}
Thus far, we have considered latent random variables that lie in Euclidean space. There are multiple other approaches that place latent variables on non-Euclidean manifolds, such as hyper-spheres
\citep{davidson2018hyperspherical}. Hence, we consider related results for deep generative models under the manifold setting. A particular property on manifolds that yields strong concentration behavior is positive Ricci curvature, with the canonical example being the hypersphere. We use the Gromov--Levy inequality from geometry to study the behavior of deep generative models when the latent variables come from such manifolds.  

We detail the main setting here. Let $(M, g)$ be a compact, connected $d_{int}$-dimensional Riemannian manifold with $d_{int} \geq 2$. Let $\lambda$ denote the infimum of the Ricci curvature tensor evaluated over any pair of unit tangent vectors associated with any point on the manifold and assume $\lambda > 0$. Letting $\nu$ be the corresponding normalized volume element, assume $z \sim \nu$. We consider the setting where $M$ is embedded in an ambient Euclidean space $\R^{d_{ext}}$. 
We assume that the embedding map $\phi:M\to \R^{d_{ext}}$ is Lipschitz with respect to the geodesic distance $D_{geo}$, so that $\sup_{a, b \in M} ||\phi(a) - \phi(b)||_2 \leq \mathcal{L} D_{geo}(a, b)$. 
This can be interpreted as a condition that controls the distortion of the geodesic distance structure when performing the embedding. 

To concretely illustrate the above setting, consider the $(d-1)$-hypersphere $rS^{d - 1}$ of radius $r$ naturally embedded in $\R^d$. It is a compact and connected manifold with $d_{int} = d - 1$, $d_{ext} = d$ and constant positive Ricci scalar curvature. $z \sim \nu$ implies $z$ is uniformly distributed on the hypersphere. The Lipschitz property is verified by observing that for any $x, y \in rS^{d - 1} \subset \R^d$, the geodesic distance $D_{geo}(x, y) = r\,  \textnormal{arccos}(x^Ty/r^2)$ upper bounds the Euclidean distance $||x- y||_2$. We now state our result.
 
\begin{theorem}[Concentration of latent variables on manifold] \label{manifold_iso}
Let $(M, g)$, $\nu$, $z$, $\lambda$ be defined as above. Let the embedding $\phi: M \to \R^{d_{ext}}$ be a $\mathcal{L}_\phi$-Lipschitz function with respect to the geodesic distance, and let $\hat{f}: \R^{d_{ext}} \to \R^p$ be any finite neural network function with Lipschitz constant $\mathcal{L}$. Then for any $u \in S^{p - 1}$, 
$$Pr(|u^T[\hat{f}\circ\phi (z) - E\{\hat{f}\circ\phi(z)\}]| \geq t) \leq 2\exp(-t^2/C_{\lambda}^2)$$
where $C_{\lambda}^2 = C^2p\mathcal{L}^2\mathcal{L}_\phi^2/\lambda$ and $C > 0$ is an absolute constant. 
\end{theorem}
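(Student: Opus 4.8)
The plan is to reduce the $\R^p$-valued statement to a one-dimensional Lipschitz concentration inequality on the metric measure space $(M, D_{geo}, \nu)$ and then invoke the Gromov--Levy inequality, which is available precisely because $M$ is compact and connected with $d_{int}\geq 2$ and Ricci curvature bounded below by $\lambda g$ with $\lambda>0$. First I would fix $u\in S^{p-1}$ and bound the Lipschitz constant, with respect to the geodesic distance $D_{geo}$, of the scalar function $x\mapsto u^T(\hat f\circ\phi)(x)$ on $M$: by Cauchy--Schwarz the projection $v\mapsto u^Tv$ is $1$-Lipschitz on $\R^p$; $\hat f$ is $\mathcal L$-Lipschitz in the Euclidean norm by hypothesis (finiteness of $\mathcal L$ being guaranteed for any $\hat f\in\mathcal F$ by Proposition~\ref{finite_lipschitz}); and $\phi$ is $\mathcal L_\phi$-Lipschitz from $(M,D_{geo})$ into $\R^{d_{ext}}$ by assumption. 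Composing, $x\mapsto u^T(\hat f\circ\phi)(x)$ is $\mathcal L\mathcal L_\phi$-Lipschitz on $(M,D_{geo})$, and since it is continuous and $M$ is compact its median and mean under $\nu$ exist and are finite.

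Next I would invoke the Gromov--Levy isoperimetric inequality \citep{gromov1983topological}. Under the stated curvature lower bound, the normalized volume $\nu$ has isoperimetric profile no smaller than that of the round sphere of dimension $d_{int}$ and constant Ricci curvature $\lambda$, and the standard concentration consequence is that every $1$-Lipschitz $h:(M,D_{geo})\to\R$ obeys the sub-Gaussian bound $\nu(|h-m_h|\geq s)\leq 2\exp(-c\lambda s^2)$ for an absolute constant $c>0$, where $m_h$ denotes a median of $h$ and the dimensional factors in the sharp constant are absorbed since $d_{int}\geq 2$. Applying this with $h=(\mathcal L\mathcal L_\phi)^{-1}u^T(\hat f\circ\phi)$ gives
$$Pr\bigl(|u^T(\hat f\circ\phi)(z)-m|\geq t\bigr)\leq 2\exp\!\left(-\frac{c\lambda t^2}{\mathcal L^2\mathcal L_\phi^2}\right)$$
for $m$ a median of $u^T(\hat f\circ\phi)(z)$.

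It then remains to re-centre at the mean: integrating the tail bound gives $|E\{u^T(\hat f\circ\phi)(z)\}-m|\leq C'\mathcal L\mathcal L_\phi/\sqrt\lambda$, so splitting into the ranges $t\geq 2C'\mathcal L\mathcal L_\phi/\sqrt\lambda$ and $t<2C'\mathcal L\mathcal L_\phi/\sqrt\lambda$ (on the latter the claimed bound already exceeds $1$ for a large enough absolute constant) yields $Pr(|u^T[\hat f\circ\phi(z)-E\{\hat f\circ\phi(z)\}]|\geq t)\leq 2\exp(-t^2/C_\lambda^2)$ with $C_\lambda^2\asymp \mathcal L^2\mathcal L_\phi^2/\lambda$; this is the paper's noted fact that swapping median for mean costs only universal constants. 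The additional factor $p$ in the stated $C_\lambda^2=C^2p\mathcal L^2\mathcal L_\phi^2/\lambda$ is harmless slack kept for uniformity with the companion theorems --- it appears if one instead argues coordinatewise on the $\hat f_j\circ\phi$ and combines the $p$ resulting tails via $\sum_j|u_j|\leq\sqrt p\,\|u\|_2=\sqrt p$, an approach that additionally delivers sub-Gaussianity of $\|\hat f\circ\phi(z)-E\{\hat f\circ\phi(z)\}\|_2$.

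The only substantive point is the invocation of the Gromov--Levy inequality in exactly this quantitative form, together with the bookkeeping it demands: that $\nu$ must be the \emph{normalized} volume element, that compactness of $M$ underpins both the existence of medians and the sphere comparison, and that the exponent scales with the raw Ricci lower bound $\lambda$ rather than a dimension-normalized sectional-curvature quantity. Everything else --- the Lipschitz composition through $\phi$, $\hat f$, and $u^T(\cdot)$, the median-to-mean exchange, and the elementary $\ell_1$--$\ell_2$ inequality for $u$ --- is routine.
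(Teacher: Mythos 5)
Your proof is correct, and it takes a slightly different and in fact sharper route than the paper's. The paper decomposes $h=\hat f\circ\phi$ into its $p$ coordinate functions $h_1,\dots,h_p$, applies Gromov--Levy (their Theorem~\ref{gromov_levy}, already stated in two-sided mean-centered form) to each $h_j$ separately, and then reassembles via Lemma~\ref{subGaussian_components}, which uses $\|u\|_1\le\sqrt p\,\|u\|_2$ and is the sole source of the factor $p$ in $C_\lambda^2$. You instead apply the Lipschitz concentration inequality directly to the scalar $x\mapsto u^T(\hat f\circ\phi)(x)$, observing that projection along a unit vector is $1$-Lipschitz, which delivers the same sub-Gaussian tail with constant $C_\lambda^2\asymp\mathcal L^2\mathcal L_\phi^2/\lambda$, i.e. without the extraneous $p$. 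You correctly diagnose the $p$ as slack the authors retain for uniformity of presentation with their other theorems (and to additionally record vector sub-Gaussianity of $h(z)-E\{h(z)\}$). The one piece of unnecessary work in your argument is the median-to-mean conversion: the version of Gromov--Levy quoted and used by the paper is already mean-centered, so you could apply it as-is; your conversion is nonetheless correct and only touches the absolute constant, as you note.
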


The above result shows that the random vector $\hat{f}\circ\phi (z) - E\{\hat{f}\circ\phi(z)\}$ is sub-Gaussian. 

\section{Diffusion Models}

Diffusion models are important classes of deep generative models, with the denoising diffusion probabilistic model \citep{ho2020denoising} being one prominent example. Such models operate by modeling data generation via a diffusion process $(X_\tau)_{\tau = 0}^T$, with $X_\tau$ $p$-dimensional. One infers a 
reverse sampling process $X_T, X_{T-1},\ldots,X_0$ starting with a Gaussian latent variable $z = X_T \sim N_p(0, I)$ and performing a sequence of neural network transformations to generate the sample $X_0$ by iterating the following update step \citep{ho2020denoising} from $\tau = T,\ldots,1$: 
\begin{align} \label{iterative}
    X_{\tau - 1} &= (1/\sqrt{\alpha_\tau}) \{X_{\tau} - (1-\alpha_\tau)/\sqrt{1-\bar{\alpha}_\tau}\hat{f}(X_{\tau}, \tau)\} + \epsilon_\tau \sigma_{\tau} \text{1}_{\tau > 1},
\end{align}
where $(\alpha_\tau)$, $(\bar{\alpha}_\tau)$ and $(\sigma_\tau)$ are fixed sequences, $\epsilon_1, \ldots, \epsilon_T$ are independent $N_p(0, I)$ random vectors, $\text{1}_{\tau > 1}$ is an indicator function that prevents the sampler from adding noise on the last step, and $\hat{f}$ is a finite neural network that takes in $X_\tau$ and the time step $\tau$ as input.  

We develop a reduction argument, detailed in section \ref{reduction} in the supplementary materials, that allows us to treat the iterative transformations performed above equivalently as a single Lipschitz transformation on an augmented Gaussian random vector. This yields the following result for diffusion models with Gaussian latent variables. 

\begin{theorem}[Diffusion Models with Gaussian Latent Variables] \label{diffusion}
Let $X_0 \in \R^p$ be a sample generated from a denoising diffusion probabilistic model using the iterative procedure in \eqref{iterative}. Then for any unit vector $u \in S^{p - 1}$, there exists $\mathcal{L}_1, \ldots, \mathcal{L}_T > 0$ such that
$Pr[|u^T\{X_0  - E(X_0)\}| > t] \leq 2\exp(-t^2/C_p^2)$ where $C_p^2 = C^2p(\prod_{\tau = 1}^T\mathcal{L}_\tau)^2$ and $C > 0$. 
\end{theorem}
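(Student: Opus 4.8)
The plan is to realize the generated sample $X_0$ as a single Lipschitz function of one standard Gaussian vector, and then invoke Theorem~\ref{gaussian_iso}. First I would stack the latent variable and all injected noises into one augmented vector $W = (X_T, \epsilon_1, \ldots, \epsilon_T) \in \R^{p(T+1)}$. Since $X_T \sim N_p(0, I)$ and $\epsilon_1, \ldots, \epsilon_T$ are independent $N_p(0, I)$ vectors, $W$ is Gaussian with mean zero and identity covariance, so in particular $\|\mathrm{cov}(W)\| = 1$; this is the object to which the Gaussian concentration result will be applied.

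Next I would verify that the map $W \mapsto X_0$ obtained by unrolling \eqref{iterative} is globally Lipschitz. For each fixed $\tau$, write the update as $X_{\tau-1} = g_\tau(X_\tau, \epsilon_\tau)$ with $g_\tau(x, e) = \alpha_\tau^{-1/2}\{x - (1-\alpha_\tau)(1-\bar\alpha_\tau)^{-1/2}\hat f(x, \tau)\} + \sigma_\tau e\, \mathbf{1}_{\tau > 1}$. The key sub-step is that $x \mapsto \hat f(x, \tau)$ is Lipschitz for every fixed $\tau$: freezing the discrete time input of a finite feed-forward network leaves a finite feed-forward network, so Proposition~\ref{finite_lipschitz} applies. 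Consequently each $g_\tau$ obeys $\|g_\tau(x, e) - g_\tau(x', e')\| \le a_\tau \|x - x'\| + b_\tau \|e - e'\|$ with $a_\tau = \alpha_\tau^{-1/2}\{1 + (1-\alpha_\tau)(1-\bar\alpha_\tau)^{-1/2}\mathcal{L}\}$ and $b_\tau = \sigma_\tau \mathbf{1}_{\tau>1}$, both finite.

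Then I would unroll the recursion: comparing the trajectories started from $W$ and $W'$ gives $\|X_0 - X_0'\| \le (\prod_{\tau=1}^T a_\tau)\|X_T - X_T'\| + \sum_{\tau=1}^T (\prod_{j<\tau} a_j) b_\tau \|\epsilon_\tau - \epsilon_\tau'\|$, and a Cauchy--Schwarz step against $\|W - W'\|$ bounds the overall Lipschitz constant $\Lambda$ of $W \mapsto X_0$ by a quantity that is finite and can be written as $\Lambda \le \prod_{\tau=1}^T \mathcal{L}_\tau$ for a suitable choice of $\mathcal{L}_\tau > 0$ (for instance the per-step value $\mathcal{L}_\tau = a_\tau + b_\tau$, whose product dominates each term of the expansion above). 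Finally I would apply Theorem~\ref{gaussian_iso} to the Lipschitz function $W \mapsto X_0$ with latent dimension $p(T+1)$, output dimension $p$, Lipschitz constant $\Lambda$, and identity covariance, obtaining $Pr(|u^T\{X_0 - E(X_0)\}| > t) \le 2\exp(-t^2/C_p^2)$ with $C_p^2 = C^2 p \Lambda^2 \le C^2 p (\prod_{\tau=1}^T \mathcal{L}_\tau)^2$. Crucially, the bound does not deteriorate with the augmented dimension because Theorem~\ref{gaussian_iso} is dimension-free in $d = p(T+1)$.

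The main obstacle is the bookkeeping in this reduction rather than any new probabilistic input: one must (i) confirm that clamping the time argument of $\hat f$ keeps it inside $\mathcal{F}$ so that Proposition~\ref{finite_lipschitz} is legitimately invoked, and (ii) propagate the Lipschitz constants correctly through a recursion that is \emph{not} a plain composition, since a fresh noise vector $\epsilon_\tau$ is fed in at every step; this branching is exactly why one needs the cross-term estimate above instead of a one-line "composition of Lipschitz maps" argument. Once the augmented-vector representation and the constant $\Lambda$ are in hand, the conclusion is an immediate consequence of the already-established Gaussian Lipschitz concentration.
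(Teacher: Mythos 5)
Your proof is correct and shares the paper's central insight: write $X_0$ as a single deterministic Lipschitz map of the augmented Gaussian vector $(X_T, \epsilon_1, \ldots, \epsilon_T) \in \R^{p(T+1)}$ and invoke the dimension-free Gaussian Lipschitz concentration of Theorem~\ref{gaussian_iso}. Where you differ is in \emph{how} the Lipschitz constant of this map is established. The paper lifts the recursion to an auxiliary process $Y_\tau = (X_\tau, \epsilon_1, \ldots, \epsilon_T) \in \R^{p(T+1)}$ of fixed dimension, shows each update $Y_\tau \mapsto Y_{\tau-1}$ is a Lipschitz self-map with constant $\mathcal{L}_\tau$ (using Lemmas~\ref{augmentation_lipschitz}, \ref{projection_lipschitz}, \ref{ID_lipschitz}), and then composes $T$ such maps and projects onto the first $p$ coordinates; the $\prod_\tau \mathcal{L}_\tau$ bound is then a plain composition-of-Lipschitz-maps argument. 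You instead keep a two-argument modulus $\|g_\tau(x,e) - g_\tau(x',e')\| \le a_\tau\|x-x'\| + b_\tau\|e-e'\|$, unroll the recursion directly, and use Cauchy--Schwarz to convert the resulting weighted sum into a single Euclidean Lipschitz bound for $W \mapsto X_0$. Both yield a finite overall constant, and since the theorem only asserts the \emph{existence} of $\mathcal{L}_1, \ldots, \mathcal{L}_T$ with a given product, either route suffices. One small point worth tightening: your claim that the product $\prod(a_\tau + b_\tau)$ "dominates each term of the expansion" is slightly loose --- it requires $\prod_{j>\tau} a_j \ge 1$ (true here because $\alpha_\tau \in (0,1)$ forces $a_\tau > 1$, but worth saying), and term-by-term domination is not quite the same as dominating the square root of the sum of squares; the cleaner step is to observe $\Lambda^2 \le (\sum_k c_k)^2$ and then bound $\sum_k c_k$ by the product. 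Your observation that freezing the time argument of $\hat f$ leaves a finite network (so Proposition~\ref{finite_lipschitz} applies) is the same fact the paper encodes via Lemma~\ref{augmentation_lipschitz}, stated the other way around.
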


Here, $X_0  - E(X_0)$ is a sub-Gaussian random vector. We thus demonstrate that qualitatively, Gaussian diffusion models also suffer from light-tails, despite utilizing multiple transformations. The quantities $\mathcal{L}_1, \ldots, \mathcal{L}_T$ can intuitively be thought of as Lipschitz constants characterizing each iterative step of the sampling process. 

\section{Simulations and Data Illustration}

We assess the practical relevance of our theoretical results through simulations and data illustrations. We sampled $10000$ values from a bivariate Cauchy distribution with mode $0$ and scale matrix $I$. We then trained multiple generative adversarial networks with different depths and latent variable dimensions, as well as a denoising diffusion model on these data. We show the Cauchy training data and $10000$ samples from a four-layer generative adversarial network fitted with $64$ standard Gaussian latent variables in Figure \ref{fig:cauchy}. Although the generated samples matched the center of the Cauchy samples well, in sharp contrast to the observed data, there were no outlying values. We observe the same pattern when inspecting samples generated from the other fitted generative adversarial networks (Figure \ref{fig:sensitivity}) and the diffusion model (Figure \ref{fig:diffusion}). Our simulation results thus agree well with our theory that these deep generative models are unable to capture heavy tails in the target distribution. We also attempted to fit a Gaussian variational autoencoder to the Cauchy data but were unable to get the training to converge. Even when the learning rate was set to extremely small values (such as $1\mathrm{e}-8$), the training loss often fluctuates by orders of magnitude over epochs. Practitioners often report numerical instability when training variational autoencoders \citep{child2021Very, dehaene2021Re, rybkin2021Simple}.  

\begin{figure} 
    \centering
    \begin{subfigure}[t]{0.47\textwidth}
        \centering
        \includegraphics[width=1\textwidth]{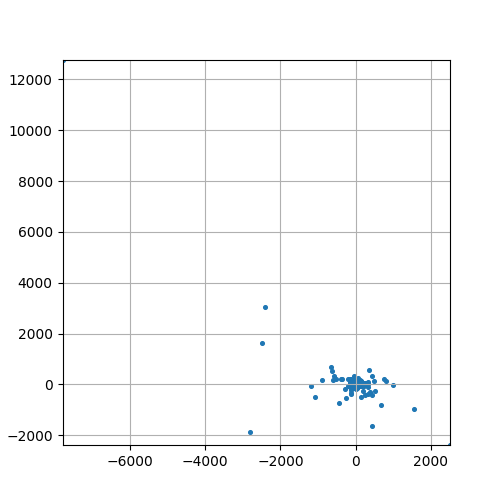}
        \caption{Samples from bivariate Cauchy distribution centred at $0$ with identity scale matrix}
    \end{subfigure}\;
    \begin{subfigure}[t]{0.47\textwidth}
        \centering
        \includegraphics[width=1\textwidth]{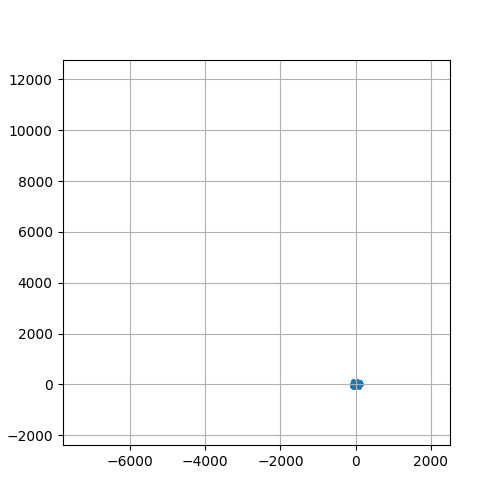}
        \caption{Samples from fitted generative adversarial network}
    \end{subfigure}\;

    \caption{Comparisons between Cauchy samples and synthetic samples from a generative adversarial network.}    \label{fig:cauchy}
\end{figure}

Next, we analyzed data on the Standard and Poor's 500 and the Dow Jones Industrial Average indices from Yahoo Finance. We computed daily returns in basis points for both indices from January $2008$ to April $2024$, totaling $4096$ data points. We then trained a generative adversarial network using these data. The generator has four layers and $64$-dimensional standard Gaussian latent variables. We overlay $4096$ samples of the generated returns with the actual returns in Figure \ref{fig:return}.  The actual daily returns from the Standard and Poor's and Dow Jones indices are positively correlated with each other. The generated returns were able to capture this correlation well. Financial returns are well known to be heavy-tailed. We take the magnitudes of actual and generated returns and inspect them on a log-log plot. Observe that the generated returns are much more concentrated than the actual returns in Figure \ref{fig:returnloglog}. 

In both the simulated and financial data setting, despite the samples being only $2$ dimensional, samples from the fitted generative networks with $64$ dimensional latent variables were unable to capture tail values and generally underestimated uncertainty.

\begin{figure}
    \centering
    \begin{subfigure}[t]{0.48\textwidth}
        \centering
        \includegraphics[width=1\textwidth]{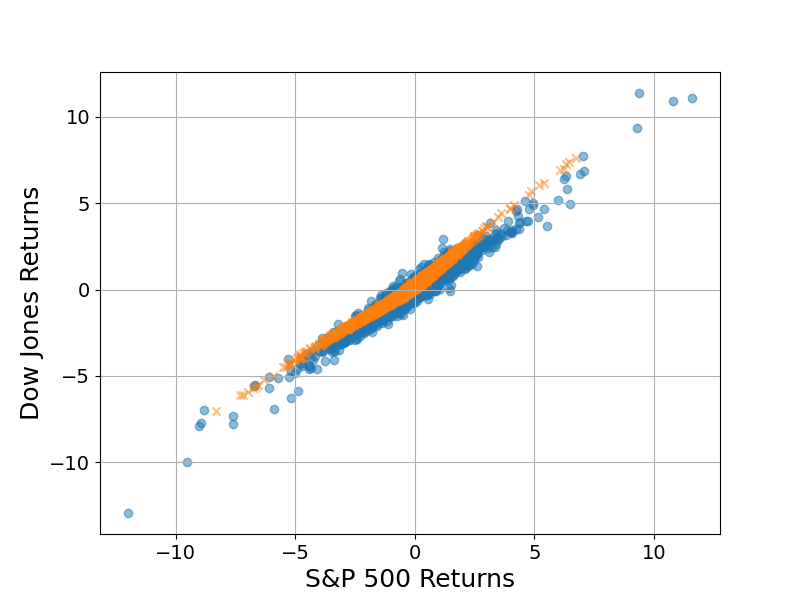}
        \caption{Actual and Synthetic returns for Standard and Poor's $500$ and Dow Jones Industrial Average}
        \label{fig:return}
    \end{subfigure}\;
    \begin{subfigure}[t]{0.48\textwidth}
        \centering
        \includegraphics[width=1\textwidth]{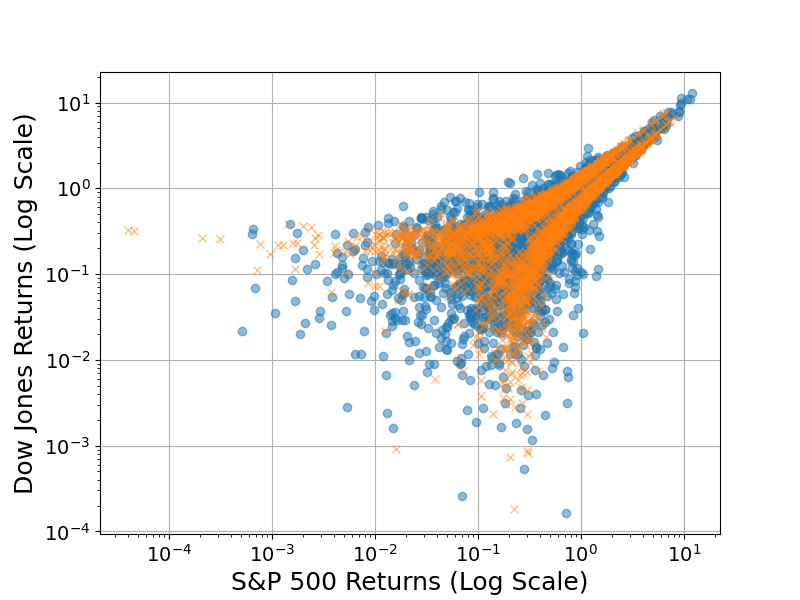}
        \caption{Log-log plot of actual and synthetic return magnitudes for Standard and Poor's $500$ and Dow Jones Industrial Average}
        \label{fig:returnloglog}
    \end{subfigure}\;

    \caption{Comparisons between actual returns from Standard and Poor's 500 and Dow Jones Industrial Average indices versus synthetic samples from a generative adversarial network.}   
\end{figure}

\section{Discussion}
The literature on deep generative models is vast and rapidly evolving. The general framework outlined in this article can be used to analyze other generative models that push forward Gaussian and log-concave latent variables, for example, flow-based models, as long as one can show that the overall push-forward mapping is Lipschitz. 

One focus of our work is on results that are dimension-free or have small dependence on the latent variable dimension $d$. It is natural to consider applying our framework to sub-Gaussian latent variables. It can be shown that, in general, dimension-free Lipschitz concentration results are not attainable for sub-Gaussian random vectors \citep{boucheron2013concentration, ledoux2013probability}. A celebrated inequality due to Talagrand \citep{ledoux1997talagrand, talagrand1996new} shows that if an additional convexity constraint is imposed on $\hat{f}$, a dimension-free bound can be attained. Such convexity assumptions are not appropriate for deep neural networks. 

Contrary to the widespread practice of defaulting to Gaussian latent variables in deep generative models, our work indicates that the choice of latent variable distribution plays a crucial role in applications. It is of great interest to develop more sophisticated priors for these models that allow them to handle heavier-tailed data in finance, anomaly detection, and beyond. Another promising direction is to develop alternative push-forward generative models that go beyond Lipschitz transformations. 

\section*{Acknowledgement}

This work was partially supported by the National Science Foundation, the Office of Naval Research, the National Institutes of Health, and the Warren Alpert Foundation. 
 
\section*{Supplementary material}
\label{SM}
 
\appendix

\section{Preliminaries on Concentration}

\subsection{Sub-Gaussian and sub-exponential random vectors}

In the section, we review the definitions of sub-Gaussian and sub-exponential random variables and random vectors, as well as their various equivalent characterizations.  
\begin{definition}[Sub-Gaussian random variable]
    A real-valued random variable $z$ with mean $E(z) = \mu$ is sub-Gaussian if there exists a constant $C_1 > 0$ such that
\begin{equation*}
    Pr(|z - \mu| \geq t) \leq 2\exp(-t^2/C_1^2)
\end{equation*}
for all $t \geq 0$. 
\end{definition}

The following equivalent characterization will be useful:

\begin{proposition}[Sub-Gaussianity via Orlicz norm]
    A real-valued random variable $z$ with mean $E(z) = \mu$ is sub-Gaussian if and only if there exist constant $C_2 > 0$ such that 
\begin{equation*}
    E[\exp\{(z - \mu)^2/C_2^2\}] \leq 2 
\end{equation*}
The smallest such constant $\inf_{C_2 > 0} E[\exp\{(z - \mu)^2/C_2^2\}] \leq 2 $ is an Orlicz norm of $z$, denoted $||z - \mu||_{\psi_2}$. In other words, $z$ is sub-Gaussian if and only if $||z - \mu||_{\psi_2}$ is finite. 
\end{proposition}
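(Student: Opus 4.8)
The plan is to prove the two implications of the claimed equivalence and then justify that the infimal admissible constant behaves as a norm. For the easy direction, suppose $E[\exp\{(z-\mu)^2/C_2^2\}] \le 2$; I would apply Markov's inequality to the nonnegative variable $\exp\{(z-\mu)^2/C_2^2\}$ to get, for every $t \ge 0$,
\[
\Pr(|z-\mu| \ge t) = \Pr\!\bigl(\exp\{(z-\mu)^2/C_2^2\} \ge e^{t^2/C_2^2}\bigr) \le 2 e^{-t^2/C_2^2},
\]
so $z$ is sub-Gaussian with $C_1 = C_2$.

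For the converse --- the substantive direction --- I would start from the sub-Gaussian tail bound with constant $C_1$, set $Y = (z-\mu)^2$, fix a constant $C > C_1$, and use the layer-cake identity $e^{Y/C^2} = 1 + C^{-2}\int_0^Y e^{u/C^2}\,du$ together with Tonelli's theorem to write
\[
E\!\left[e^{Y/C^2}\right] = 1 + \frac{1}{C^2}\int_0^\infty e^{u/C^2}\,\Pr(Y \ge u)\,du.
\]
Substituting $\Pr(Y \ge u) = \Pr(|z-\mu| \ge \sqrt u) \le 2 e^{-u/C_1^2}$ and evaluating the resulting elementary integral, which is finite exactly because $C > C_1$, gives $E[e^{Y/C^2}] \le 1 + 2C_1^2/(C^2 - C_1^2)$. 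Choosing $C^2 \ge 3 C_1^2$, for instance $C_2 = \sqrt{3}\, C_1$, makes the right-hand side at most $2$, which is the moment condition.

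To close, I would argue that the infimal admissible constant is genuinely a norm: the map $s \mapsto E[\exp\{(z-\mu)^2/s^2\}]$ is non-increasing, decreases to $1$ as $s \to \infty$ by dominated convergence, and is lower semicontinuous by Fatou's lemma, so the set of admissible constants is a closed half-line $[\|z-\mu\|_{\psi_2}, \infty)$ whose left endpoint is the smallest valid constant. This endpoint is precisely the Luxemburg norm associated with the Young function $x \mapsto e^{x^2}$; using the normalization $\le 2$ here rather than the textbook convention with $e^{x^2}-1 \le 1$ only rescales the norm by a universal factor.

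The main obstacle is the constant bookkeeping in the converse: one has to notice that integrability of the tail integral requires $C$ strictly larger than $C_1$, and then check that $C^2 \ge 3C_1^2$ is enough to land at the threshold $2$ rather than at some unspecified constant. An alternative to the layer-cake computation is to expand $e^{Y/C^2}$ as a power series and bound the moments $E|z-\mu|^k$ using the sub-Gaussian tail, but that route is messier and yields a worse explicit constant.
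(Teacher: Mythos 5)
The paper offers no proof of its own here—it simply cites Section 2.5.2 of Vershynin (2018)—so there is nothing internal to compare against. Your argument is correct and is, in substance, the standard textbook proof one would find there: Markov's inequality applied to $\exp\{(z-\mu)^2/C_2^2\}$ for the easy direction, and the layer-cake identity with Tonelli plus the tail bound for the converse, with the integrability requirement $C>C_1$ and the explicit choice $C_2=\sqrt{3}\,C_1$ handled correctly. The closing remark (monotonicity, lower semicontinuity via Fatou, the admissible set being a closed half-line whose left endpoint is the Luxemburg norm, and the observation that the $\le 2$ normalization differs from the $e^{x^2}-1\le 1$ convention only by a universal factor) is also sound and fills in the one place where the paper's statement is a bit informally worded.
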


\begin{proof}
    This standard result is detailed in section 2.5.2 of \citet{vershynin2018high}. 
\end{proof}

It is known that the constants $C_1$ and $C_2$ above are equivalent to each other up to universal constants \citep{vershynin2018high}.  
The definition of sub-Gaussianity can be extended to random vectors.
\begin{definition}[Sub-Gaussian random vectors]
A random vector $z \in \R^d$ with mean $E(z) = \mu$ is sub-Gaussian if $\sup_{u \in S^{d - 1}}||u^T (z - \mu)||_{\psi_2} < \infty$, where $S^{d-1}$ is the unit sphere. 
\end{definition}

There is a related weaker notion of sub-exponentiality that we review below. 

\begin{definition}[Sub-exponential random variable]
    A real-valued random variable $z$ with mean $E(z) = \mu$ is sub-exponential if there exists a constant $C_3 > 0$ such that
\begin{equation*}
    Pr(|z - \mu| \geq t) \leq 2\exp(-t/C_3)
\end{equation*}
for all $t \geq 0$. 
\end{definition}

\begin{proposition}[Sub-exponentiality via Orlicz norm]
    A real-valued random variable $z$ with mean $E(z) = \mu$ is sub-exponential if and only if there exist constant $C_4 > 0$ such that 
\begin{equation*}
    E\{\exp(|z - \mu|/C_4)\} \leq 2 
\end{equation*}
The smallest such constant $\inf_{C_4 > 0} E\{\exp(|z - \mu|/C_4)\} \leq 2 $ is an Orlicz norm of $z$, denoted $||z - \mu||_{\psi_1}$. In other words, $z$ is sub-exponential if and only if $||z - \mu||_{\psi_1}$ is finite. 
\end{proposition}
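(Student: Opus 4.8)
The plan is to prove the two implications separately, mirroring the sub-Gaussian proposition above: Markov's inequality handles the direction from the exponential-moment bound to the tail bound, and a tail-integral (layer-cake) computation handles the reverse direction.

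For the easy direction, I would assume a constant $C_4 > 0$ exists with $E\{\exp(|z-\mu|/C_4)\} \leq 2$ and apply Markov's inequality to the nonnegative random variable $\exp(|z-\mu|/C_4)$: for any $t \geq 0$, $\Pr(|z-\mu| \geq t) = \Pr\{\exp(|z-\mu|/C_4) \geq e^{t/C_4}\} \leq e^{-t/C_4}\, E\{\exp(|z-\mu|/C_4)\} \leq 2e^{-t/C_4}$, so $z$ is sub-exponential with $C_3 = C_4$.

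For the converse, assume $\Pr(|z-\mu| \geq t) \leq 2e^{-t/C_3}$ for all $t \geq 0$ and set $W = |z-\mu| \geq 0$. For any $C_4 > C_3$ I would use the identity $E\{\exp(W/C_4)\} = 1 + C_4^{-1}\int_0^\infty e^{t/C_4}\,\Pr(W \geq t)\,dt$, substitute the tail bound, and evaluate the resulting integral, which converges precisely because $C_4 > C_3$; this gives $E\{\exp(W/C_4)\} \leq 1 + 2C_3/(C_4 - C_3)$. Taking $C_4 = 3C_3$ forces the right-hand side to equal $2$, which establishes the Orlicz bound, and the infimum over admissible $C_4$ is then finite and is by definition $||z-\mu||_{\psi_1}$.

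The argument is entirely routine, so there is no real obstacle; the only point requiring a small amount of care is that the tail-integral representation of the exponential moment converges only when $C_4$ is strictly larger than the tail parameter $C_3$, which is what forces the loss of a constant factor (a factor of $3$ here) in passing from the tail bound to the $\psi_1$ norm. As with the sub-Gaussian case, the normalization constant $2$ in the statement is immaterial — any finite exponential-moment bound is equivalent up to universal constants — and a complete treatment appears in section 2.7 of \citet{vershynin2018high}.
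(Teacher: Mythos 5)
Your proof is correct. Both directions are handled properly: Markov's inequality applied to $\exp(|z-\mu|/C_4)$ gives the tail bound with the same constant, and the layer-cake identity $E\{\exp(W/C_4)\} = 1 + C_4^{-1}\int_0^\infty e^{t/C_4}\Pr(W\geq t)\,dt$ together with the tail bound gives $E\{\exp(W/C_4)\}\leq 1 + 2C_3/(C_4-C_3)$, which equals $2$ at $C_4 = 3C_3$; your bookkeeping checks out. Note, however, that the paper does not actually prove this proposition: its ``proof'' is a one-line citation to Section 2.7 of \citet{vershynin2018high}. So there is no in-paper argument to compare against; your write-up supplies the standard argument that the cited reference contains, and it is sound. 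The only minor imprecision is the phrasing that the tail-integral representation ``converges only when $C_4 > C_3$'' --- the identity itself holds for all $C_4 > 0$ (both sides may be $+\infty$); it is the upper bound obtained after substituting the assumed tail decay whose integral requires $C_4 > C_3$ to be finite. This does not affect the correctness of the conclusion.
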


\begin{proof}
    This standard result is detailed in section 2.7 of \citet{vershynin2018high}. 
\end{proof}

It is known that the constants $C_3$ and $C_4$ above are equivalent to each other up to universal constants \citep{vershynin2018high}.  
The notion can be extended to random vectors. 
\begin{definition}[Sub-exponential random vector]
A random vector $z \in \R^d$ with mean $E(z) = \mu$ is sub-exponential if $\sup_{u \in S^{d - 1}}||u^T (z - \mu)||_{\psi_1} < \infty$, where $S^{d-1}$ is the unit sphere. 
\end{definition}

\subsection{Lipschitz concentration of random vectors}

We are interested in studying the distributional properties of $\hat{f}(z)$, where $\hat{f}$ is a Lipschitz function and $z$ is sub-Gaussian or log-concave. 
We start with the Gaussian isoperimetric inequality. 

\begin{theorem}
[Gaussian isoperimetric inequality] 
    Let $z$ be a standard multivariate Gaussian random vector in $\R^d$. Let $f:\R^d \to \R$ be a real-valued, $\mathcal{L}$-Lipschitz function with respect to the Euclidean norm. Then the following concentration inequality holds: 
 $$Pr[|\hat{f}(z) - E\{\hat{f}(z)\}| \geq t] \leq 2\exp\{-t^2/(2\mathcal{L}^2)\}$$ for all $t \geq 0$.
\end{theorem}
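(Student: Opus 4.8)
The plan is to deduce the two-sided tail bound from an exponential-moment bound on $f(z)$, obtained by Herbst's argument applied to the Gaussian logarithmic Sobolev inequality; this route yields the stated constant $2\mathcal{L}^2$ exactly and is manifestly dimension-free. By replacing $f$ with $f/\mathcal{L}$ it suffices to treat the case $\mathcal{L}=1$, and by a union bound over the events $\{f(z)-E\{f(z)\}\ge t\}$ and $\{-(f(z)-E\{f(z)\})\ge t\}$ it suffices to control one of the one-sided tails. Before starting I would note that $E\{f(z)\}$ is finite and all exponential moments exist, since $|f(z)|\le |f(0)|+\mathcal{L}\|z\|_2$ has Gaussian tails, so centering is legitimate; and I would reduce to smooth $f$ by mollifying with a narrow Gaussian kernel, which preserves the Lipschitz constant and produces $f_\epsilon\in C^\infty$ with $f_\epsilon\to f$ uniformly on $\R^d$, so that the tail bound established for each $f_\epsilon$ passes to the limit.

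For the core step I would invoke the Gaussian logarithmic Sobolev inequality $\mathrm{Ent}_{\gamma_d}(h^2)\le 2\int \|\nabla h\|_2^2\, d\gamma_d$ for smooth $h$, where $\gamma_d$ is the standard Gaussian measure on $\R^d$; if a self-contained treatment is wanted this is proved by first establishing the one-dimensional case (for instance through the Ornstein--Uhlenbeck semigroup together with a Bakry--\'Emery curvature computation, or by a direct Gaussian interpolation) and then tensorizing via subadditivity of entropy, which is exactly where the dimension-free character enters. Writing $g$ for a smooth $1$-Lipschitz function with $E\{g(z)\}=0$ and setting $H(\lambda)=E\{\exp(\lambda g(z))\}$, I would apply the inequality with $h=\exp(\lambda g/2)$; using $\|\nabla g\|_2\le 1$ and the identity $\mathrm{Ent}(e^{\lambda g})=\lambda H'(\lambda)-H(\lambda)\log H(\lambda)$ this gives $\lambda H'(\lambda)-H(\lambda)\log H(\lambda)\le \tfrac{1}{2}\lambda^2 H(\lambda)$. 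Dividing by $\lambda^2 H(\lambda)$ turns the left side into $\tfrac{d}{d\lambda}\{\lambda^{-1}\log H(\lambda)\}$, and since $\lambda^{-1}\log H(\lambda)\to E\{g(z)\}=0$ as $\lambda\downarrow 0$, integration yields $\log H(\lambda)\le \lambda^2/2$, i.e.\ $E\{\exp(\lambda g(z))\}\le \exp(\lambda^2/2)$ for every $\lambda>0$. A Chernoff bound $Pr(g(z)\ge t)\le \exp(-\lambda t+\lambda^2/2)$ optimized at $\lambda=t$ gives $Pr(g(z)\ge t)\le e^{-t^2/2}$; applying this to $g$ and to $-g$ and taking a union bound yields $Pr(|g(z)|\ge t)\le 2e^{-t^2/2}$, and rescaling $g=f/\mathcal{L}$ delivers $Pr(|f(z)-E\{f(z)\}|\ge t)\le 2\exp\{-t^2/(2\mathcal{L}^2)\}$.

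I expect the only genuine obstacle to be supplying the logarithmic Sobolev inequality with the sharp constant $2$; everything downstream is a short, standard ODE computation, and the reduction from general Lipschitz $f$ to smooth $f$ requires only checking that the Lipschitz constant and the centering do not degrade in the limit. An alternative that avoids the log-Sobolev inequality is to use the genuine Gaussian isoperimetric inequality of Borell and Sudakov--Tsirelson: applying it to $A=\{x:f(x)\le m\}$ with $m$ a median of $f(z)$, the $1$-Lipschitz property gives $A_t\subseteq\{f\le m+t\}$, and extremality of half-spaces yields $Pr(f(z)>m+t)\le 1-\Phi(t)\le \tfrac12 e^{-t^2/2}$, with the symmetric bound on the lower tail; this proves the median-centered version with the same Gaussian decay rate, which one converts to the mean-centered version at the cost of a universal-constant change, as noted after Theorem~\ref{gaussian_iso} in the main text.
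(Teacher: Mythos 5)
Your proposal is correct, and it supplies an actual proof where the paper merely cites the literature: the paper's own ``proof'' consists only of attributing the result to Borell and Sudakov--Tsirelson and pointing to Theorem 2.26 of Wainwright (2019). Your primary route (Gaussian log-Sobolev plus Herbst's argument) is a standard and rigorous derivation that produces the mean-centered bound with the sharp constant $2\mathcal{L}^2$ directly, and the intermediate steps you outline -- the identity $\mathrm{Ent}(e^{\lambda g})=\lambda H'(\lambda)-H(\lambda)\log H(\lambda)$, the observation that dividing by $\lambda^2 H(\lambda)$ produces $\frac{d}{d\lambda}\{\lambda^{-1}\log H(\lambda)\}$, the boundary condition at $\lambda\downarrow 0$, and the Chernoff optimization at $\lambda=t$ -- are all correct. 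You also correctly flag the need to reduce to smooth $f$ (so that $\nabla g$ exists $\gamma_d$-a.e.\ with $\|\nabla g\|_2\le 1$) and to verify finiteness of exponential moments before centering; these are the usual technical points and your mollification argument handles them. Your alternative sketch via the genuine isoperimetric inequality (half-spaces are extremal, giving $Pr(f>m+t)\le 1-\Phi(t)$ for the median $m$) is exactly the approach that matches the theorem's name and the Borell / Sudakov--Tsirelson attribution, and you correctly note that converting from the median-centered to the mean-centered version costs a universal constant, a fact the paper itself remarks on after Theorem~\ref{gaussian_iso}. The trade-off between the two routes is standard: the log-Sobolev/Herbst path is cleaner for the mean-centered statement and tensorizes to give dimension-freeness transparently, while the isoperimetric path gives stronger set-level information and the median bound for free but requires an extra step for the mean.
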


\begin{proof}
    This inequality is due to \citet{sudakov1978extremal} and \citet{borell1975brunn}. The version used here as well as the proof can be found in Theorem 2.26 of \citet{wainwright2019high}.
\end{proof}

While the above result is for standard multivariate Gaussian $z$, as a simple corollary, we can obtain similar results for $z'$ that follow a general multivariate Gaussian distribution with mean $\mu$ and covariance $\Sigma$. Observe that $z'$ can be written as $ \Sigma^{1/2}z + \mu$, a linear transformation with Lipschitz constant $||\Sigma^{1/2}||$, where $||\cdot||$ denotes the spectral norm. Composing Lipschitz functions, and observing that $||\Sigma^{1/2}||^2 = ||\Sigma||$, we obtain \begin{equation*}
    Pr[|\hat{f}(z') - E\{\hat{f}(z')\}| \geq t] \leq 2\exp\{-t^2/(2\mathcal{L}^2||\Sigma||)\} 
\end{equation*} for all $t \geq 0$.

Similar Lipschitz transformation results for log-concave and strongly log-concave distributions are available from the high dimensional geometry literature.  

\begin{theorem}[Log-concave Lipschitz concentration] \label{logconcave_lipschitz}
Let $z \in \R^d$ be a random vector with isotropic log-concave probability density, so that it is centred and has identity covariance. Let $f: \R^d \to \R$ be a $\mathcal{L}$-Lipschitz function. Then 
$$\Pr[|\hat{f}(z) - E\{\hat{f}(z)\}| > t\mathcal{L}] \leq \exp(-C_5 \Psi_z t)$$
for some absolute constant $C_5> 0$, where $\Psi_z$ is the Cheeger's constant of the density of $z$. 
\end{theorem}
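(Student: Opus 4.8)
The plan is to derive this from the $L^1$-isoperimetric (Cheeger) inequality satisfied by the law of $z$, via the classical equivalence between such inequalities and exponential concentration of Lipschitz functions. By replacing $f$ with $f/\mathcal{L}$ it suffices to take $\mathcal{L}=1$ and prove $\Pr(|f(z)-E\{f(z)\}|>t)\le\exp(-C_5\Psi_z t)$; the general statement follows by rescaling $t$. Write $\mu$ for the law of $z$. Since $z$ has a log-concave density, $\mu$ is absolutely continuous and satisfies, with $A_h:=\{x:d(x,A)\le h\}$ and $\mu^+(A):=\liminf_{h\to 0^+}\{\mu(A_h)-\mu(A)\}/h$, the Cheeger inequality $\mu^+(A)\ge\Psi_z\min\{\mu(A),1-\mu(A)\}$ for every Borel set $A$; this is the meaning of ``$\Psi_z$ is the Cheeger constant of the density of $z$'', and log-concavity guarantees $\Psi_z>0$ so that the conclusion is non-vacuous.

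Next I would run the standard enlargement argument. Let $M$ be a median of $f(z)$ and $A=\{f\le M\}$, which is closed by continuity of $f$ and has $\mu(A)\ge 1/2$. Since $f$ is $1$-Lipschitz, $A_t\subseteq\{f\le M+t\}$, so $\Pr(f(z)>M+t)\le R(t)$ where $R(t):=1-\mu(A_t)$. Because $\mu$ has a density, $t\mapsto\mu(A_t)$ is continuous, and using $(A_t)_h=A_{t+h}$ its one-sided growth rate is exactly $\mu^+(A_t)$; hence the upper right Dini derivative of $R$ obeys $D^+R(t)\le-\mu^+(A_t)\le-\Psi_z R(t)$, the last step using $\mu(A_t)\ge\mu(A)\ge 1/2$. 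A standard Gronwall-type comparison for the Dini derivative of the nonnegative continuous function $R$ then gives $R(t)\le R(0)e^{-\Psi_z t}\le\tfrac12 e^{-\Psi_z t}$, so $\Pr(f(z)>M+t)\le\tfrac12 e^{-\Psi_z t}$. Applying the same argument to $-f$ controls the lower tail, and summing yields $\Pr(|f(z)-M|>t)\le e^{-\Psi_z t}$.

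Finally I would pass from the median to the mean: integrating the last bound gives $|E\{f(z)\}-M|\le\int_0^\infty\Pr(|f(z)-M|>s)\,ds\le 1/\Psi_z$, whence $\Pr(|f(z)-E\{f(z)\}|>t)\le\Pr(|f(z)-M|>t-1/\Psi_z)$; this is trivial for $t\le 2/\Psi_z$ and at most $e\cdot e^{-\Psi_z t}$ otherwise, so a routine adjustment of the absolute constant produces the stated bound with a universal $C_5$, and undoing the rescaling $f\mapsto f/\mathcal{L}$ recovers the theorem. I expect the main obstacle to be the measure-theoretic bookkeeping in the second step — verifying continuity of $t\mapsto\mu(A_t)$ and the identification of its one-sided growth with the Minkowski boundary measure, so that the Cheeger inequality genuinely integrates up to the differential inequality for $R$ — together with pinning down the precise convention for $\Psi_z$; both are classical (see the treatments of isoperimetry-versus-concentration in Ledoux's monograph and in the convex-geometry references cited above), so an equally valid route is simply to invoke the known isoperimetry-to-concentration equivalence as a black box and then perform the median-to-mean step.
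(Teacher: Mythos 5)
The paper does not actually prove this result: its ``proof'' is a citation to \citet{gromov1983topological}, in the form adopted by \citet{lee2018kannan}. Your proposal supplies the classical argument those references rest on: the isoperimetry-to-concentration derivation via the Cheeger inequality, the enlargement set $A_t$, the differential inequality $D^+R(t)\le -\Psi_z R(t)$, and a Gronwall comparison, followed by a median-to-mean shift. This is the right and standard route, your Dini-derivative bookkeeping is sound, and the observation that $t\mapsto\mu(A_t)$ is continuous because $\mu$ has a density (so level sets of $x\mapsto d(x,A)$ are $\mu$-null) is exactly the measure-theoretic point one needs to make the Cheeger inequality integrate. So your proof is a genuine, and more informative, derivation where the paper only gives a black-box citation.

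One fine point in the closing step. After shifting from median to mean you have $\Pr(|f(z)-E\{f(z)\}|>t)\le e\,e^{-\Psi_z t}$ for $t>1/\Psi_z$ (and only the trivial bound $\le 1$ below that), and you propose to absorb the prefactor $e$ into the exponent. That absorption cannot work uniformly in $t$: for any fixed $C_5>0$ and $t>0$ small, $\exp(-C_5\Psi_z t)<1$, whereas $\Pr(|f(z)-E\{f(z)\}|>t)$ can be arbitrarily close to $1$ (take $z$ exponential and $f$ the identity), so the ``trivial for $t\le 2/\Psi_z$'' step fails. In fact the theorem as written, with no multiplicative constant in front of the exponential, is itself slightly too strong when centred at the mean; the correct form carries a prefactor, e.g.\ $\Pr(|f(z)-E\{f(z)\}|>t\mathcal{L})\le 2\exp(-C_5\Psi_z t)$, and this is what the paper actually uses downstream: both Theorem~\ref{logconcave_iso} and the sub-exponential Orlicz-norm characterization it invokes carry the factor of $2$. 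Around the median your bound $\Pr(|f(z)-M|>t)\le e^{-\Psi_z t}$ is prefactor-free and correct. This is a normalization slip rather than a flaw in the argument; state the conclusion with the prefactor (or around the median) instead of claiming it can be routinely removed.
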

\begin{proof}
 The above theorem is due to \cite{gromov1983topological}. We adopt the form used in \cite{lee2018kannan}. In the literature $\Psi_z$ is sometimes defined by the reciprocal of the definition we used here. We adopt the definition in \cite{lee2018kannan} for consistency. 
\end{proof}

By reparametrizing $t'= t\mathcal{L}$ and rewriting $C_5$ as $1/C_6$, we can obtain an equivalent inequality, which
we use repeatedly,
\begin{align*}
    \Pr[|\hat{f}(z) - E\{\hat{f}(z)\}| > t'] \leq \exp\{-\Psi_z t'/(C_6\mathcal{L})\}.
\end{align*}
The isotropic and centered conditions can be dropped by considering $\Sigma^{1/2}z + \mu$ for general covariance $\Sigma$ and mean $\mu$, with the Lipschitz constant in the upper bound then increasing by a factor of $||\Sigma^{1/2}||$. 

We also consider Lipschitz transformations of strongly log-concave random vectors. The definition of $\gamma$-strongly log-concave distributions can be found in chapter 3 of \cite{wainwright2019high}. 

\begin{theorem}
[Strongly Log-concave Lipschitz concentration]\label{strong_logconcave}
    Let $z$ be a $\gamma$-strongly log-concave random vector in $\R^d$. Let $\hat{f}:\R^d \to \R$ be a real-valued, $\mathcal{L}$-Lipschitz function with respect to the Euclidean norm. Then the following concentration inequality holds. 
 $$Pr[|\hat{f}(z) - E\{\hat{f}(z)\}| \geq t] \leq 2\exp\{-\gamma t^2/(4\mathcal{L}^2)\}$$ for all $t \geq 0$.
\end{theorem}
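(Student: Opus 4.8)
The plan is to reduce the strongly log-concave case to the Gaussian isoperimetric inequality already recorded above, by means of a dimension-free Lipschitz change of variables. A $\gamma$-strongly log-concave random vector $z$ has a density proportional to $e^{-V}$ with $\nabla^{2} V \succeq \gamma I$ everywhere, so by Caffarelli's contraction theorem the Brenier optimal transport map $T$ pushing the standard Gaussian law $N_{d}(0, I)$ forward to the law of $z$ is $\gamma^{-1/2}$-Lipschitz with respect to the Euclidean norm. Hence, writing $g \sim N_{d}(0, I)$, we have $z \stackrel{d}{=} T(g)$, and the composition $\hat{f} \circ T : \R^{d} \to \R$ is $\mathcal{L}\gamma^{-1/2}$-Lipschitz, being a composition of an $\mathcal{L}$-Lipschitz map with a $\gamma^{-1/2}$-Lipschitz map.

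First I would apply the Gaussian isoperimetric inequality stated above to the function $\hat{f} \circ T$ and the standard Gaussian vector $g$, which gives, for every $t \geq 0$,
$$\Pr\big[\,|(\hat{f}\circ T)(g) - E\{(\hat{f}\circ T)(g)\}| \geq t\,\big] \leq 2\exp\{-t^{2}/(2\mathcal{L}^{2}\gamma^{-1})\} = 2\exp\{-\gamma t^{2}/(2\mathcal{L}^{2})\}.$$
Since $(\hat{f}\circ T)(g)$ and $\hat{f}(z)$ are equal in distribution, the left-hand side is precisely $\Pr[\,|\hat{f}(z) - E\{\hat{f}(z)\}| \geq t\,]$, so the claim follows at once; in fact the argument produces the constant $2$ rather than $4$ in the exponent, so the stated form holds a fortiori, consistent with our convention of not optimizing constants.

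An alternative route avoids optimal transport altogether: by the Bakry--\'Emery criterion a $\gamma$-strongly log-concave measure satisfies a logarithmic Sobolev inequality with constant $1/\gamma$, and the Herbst argument then yields the sub-Gaussian moment generating function bound $E[\exp\{\lambda(\hat{f}(z) - E\{\hat{f}(z)\})\}] \leq \exp\{\lambda^{2}\mathcal{L}^{2}/(2\gamma)\}$ for all real $\lambda$, after which a Chernoff bound optimized over $\lambda$ and a union bound over the two tails finish the proof. Either way, the single nontrivial input --- and thus the main obstacle --- is the dimension-free transfer of Gaussian-type concentration to the strongly log-concave setting, namely Caffarelli's theorem (equivalently, the Bakry--\'Emery log-Sobolev inequality); everything after that is routine bookkeeping with Lipschitz constants and the standard Chernoff/Herbst machinery. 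The vector-valued version in the main text then follows by applying this scalar bound to the one-dimensional projections $u^{T}\hat{f}$, and, as usual, the mean may be replaced by the median at the cost of only universal constants.
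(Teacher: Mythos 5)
Your argument is correct, and your primary route differs from the paper's. The paper's own proof of this theorem is a one-line citation to Theorem 3.16 of \citet{wainwright2019high}, which is established via the Bakry--\'Emery log-Sobolev inequality and the Herbst argument---that is, it coincides with the \emph{alternative} route you sketch at the end, not your main one. Your main route, via Caffarelli's contraction theorem, is genuinely different: you observe that the Brenier map $T$ from $N_d(0,I)$ to a $\gamma$-strongly log-concave law is $\gamma^{-1/2}$-Lipschitz, so $\hat f \circ T$ is $\mathcal{L}\gamma^{-1/2}$-Lipschitz, and the result drops out of the Gaussian isoperimetric inequality already stated in the paper. This buys a self-contained derivation within the paper's existing toolkit and even improves the exponent constant from $4\mathcal{L}^2$ to $2\mathcal{L}^2$ (which then implies the weaker stated bound). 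What the paper's citation buys instead is brevity and, implicitly, the broader Herbst/log-Sobolev machinery, which extends more readily to entropy and Poincar\'e-type variants; but for the stated sub-Gaussian tail bound the two routes are interchangeable, and both reductions to the vector case by projecting onto $u^{T}$ proceed identically.
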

\begin{proof}
    The above standard result is directly adapted from theorem 3.16 of \cite{wainwright2019high}.
\end{proof}

We will also use the Gromov--Levy inequality to study concentration on certain manifolds. 
 
\begin{theorem}[Gromov--Levy] \label{gromov_levy}
Let $(M, g)$ be a compact, connected $d$-dimensional smooth Riemannian manifold with $d \geq 2$. Use $\lambda$ to denote the infimum of the Ricci curvature tensor evaluated over any pair of unit tangent vectors associated with any point on the manifold. Assume $\lambda > 0$. Let $\nu$ be its normalized volume element and $z \sim \nu$.  Let $h: M \to \R$ be a $\mathcal{L}$-Lipschitz function. Then  
$$Pr[|h(z) - E\{h(z)\}| \geq t] \leq 2\exp\{-\lambda t^2/(2\mathcal{L}^2)\}$$
 \end{theorem}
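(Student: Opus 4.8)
This is a classical fact in geometric concentration, and the plan is to reduce it to a single substantial input: a positive lower bound on the Ricci curvature forces a strong functional inequality on $(M,\nu)$. The route I would take, which produces exactly the stated mean-centred bound with the stated constant, goes through the logarithmic Sobolev inequality and Herbst's argument. First, by replacing $h$ with $h/\mathcal{L}$ it suffices to treat the case $\mathcal{L}=1$: if I prove $\nu(|h - E\{h(z)\}| \geq t) \leq 2\exp(-\lambda t^2/2)$ for every $1$-Lipschitz $h$, then applying this to $f/\mathcal{L}$ for a general $\mathcal{L}$-Lipschitz $f$ and substituting $t \mapsto t/\mathcal{L}$ multiplies the variance proxy by $\mathcal{L}^2$ and yields the theorem. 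Compactness of $M$ guarantees $h$ is bounded, so $E\{h(z)\}$ and all exponential moments below are finite.

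The key step is to establish that $\nu$ satisfies a logarithmic Sobolev inequality with constant $\lambda$, namely $\mathrm{Ent}_\nu(g^2) \leq (2/\lambda)\int_M |\nabla g|^2\, d\nu$ for smooth $g$. Since $\nu$ is the normalized volume measure (no potential), the Bakry--\'Emery tensor coincides with the Ricci tensor, which by hypothesis is bounded below by $\lambda>0$, so the curvature--dimension condition $CD(\lambda,\infty)$ holds. The logarithmic Sobolev inequality with constant $\lambda$ then follows from the standard heat-semigroup interpolation argument, whose engine is the Bochner identity $\tfrac{1}{2}\Delta|\nabla g|^2 = |\nabla^2 g|^2 + \langle \nabla g, \nabla\Delta g\rangle + \mathrm{Ric}(\nabla g,\nabla g)$; on the compact manifold $M$ the semigroup is ergodic and smoothing, so the argument is rigorous without integrability caveats.

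Given the logarithmic Sobolev inequality I would run Herbst's argument on the $1$-Lipschitz $h$. Applying the inequality to $g = e^{sh/2}$ and using $|\nabla h|\leq 1$ gives the differential inequality $(\Lambda(s)/s)' \leq 1/(2\lambda)$ for $\Lambda(s) = \log\int_M e^{s(h - E\{h(z)\})}\, d\nu$, with $\Lambda(s)/s \to 0$ as $s\to 0^+$; integrating yields $\int_M e^{s(h - E\{h(z)\})}\, d\nu \leq e^{s^2/(2\lambda)}$ for all $s \geq 0$. A Chernoff bound optimized at $s=\lambda t$ gives $\nu(h - E\{h(z)\} \geq t) \leq e^{-\lambda t^2/2}$, and applying the same reasoning to $-h$ and taking a union bound gives $\nu(|h - E\{h(z)\}| \geq t) \leq 2e^{-\lambda t^2/2}$. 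Undoing the normalization as described in the first paragraph completes the proof.

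The main obstacle is the one genuinely deep input: the implication from a positive Ricci lower bound to the logarithmic Sobolev (equivalently, to a comparison isoperimetric) inequality. Proving the Bakry--\'Emery criterion requires the $\Gamma_2$-calculus and the semigroup interpolation sketched above; an equally valid alternative is to invoke Gromov's extension of L\'evy's spherical isoperimetric inequality, which says the isoperimetric profile of $M$ dominates that of the round sphere of the same dimension with constant Ricci curvature $\lambda$, then run the standard enlargement-of-a-half-space argument with an explicit spherical-cap estimate and a median-to-mean adjustment. Either way, in a paper of this kind I would cite this classical input (it is due to \cite{gromov1983topological}; see also Ledoux's monograph) rather than reprove it, since everything downstream --- Herbst's argument, or the isoperimetry-to-concentration reduction --- is routine.
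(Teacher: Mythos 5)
Your argument is correct, but it takes a different route from what the paper does. The paper's ``proof'' of this theorem is purely a citation: it invokes \citet{gromov1986isoperimetric} and Proposition~2.17 of \citet{ledoux2001concentration} for the one-sided $1$-Lipschitz bound, passes to the two-sided form by applying the statement to $-h$ and union-bounding, and scales to general $\mathcal{L}$ via Proposition~1.2 of the same monograph. That chain of references rests on the genuinely isoperimetric route --- the L\'evy--Gromov comparison of the isoperimetric profile of $M$ with that of the round sphere, followed by the enlargement-of-a-half-space argument and a median-to-mean adjustment --- which is what gives the theorem its name in the paper. You instead go through the curvature--dimension condition $CD(\lambda,\infty)$, the Bakry--\'Emery logarithmic Sobolev inequality $\mathrm{Ent}_\nu(g^2)\leq (2/\lambda)\int|\nabla g|^2\,d\nu$, and Herbst's argument; your bookkeeping is right (centring makes $\Lambda(s)/s\to 0$, integrating $(\Lambda(s)/s)'\leq 1/(2\lambda)$ gives the Gaussian moment-generating-function bound, and Chernoff at $s=\lambda t$ produces exactly $e^{-\lambda t^2/2}$). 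Both routes are classical and deliver the same constant; the functional route you chose has the virtue of being self-contained once the $\Gamma_2$-calculus is available and of generalizing immediately to weighted manifolds $e^{-V}\,d\mathrm{vol}$ via the Bakry--\'Emery tensor, whereas the isoperimetric route the paper cites gives the sharper comparison statement (dominance by the spherical isoperimetric profile, not just a concentration tail) at the cost of a harder geometric input. One small slip: the reference you give for the deep input, \citet{gromov1983topological}, is the one this paper uses for the Cheeger-constant material in the log-concave theorem; the Gromov--L\'evy inequality itself is attributed here to \citet{gromov1986isoperimetric}.
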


\begin{proof}
This result is due to \cite{gromov1986isoperimetric}, and we use the version provided in proposition 2.17 of \cite{ledoux2001concentration}, which states this inequality in one-sided form for $1$-Lipschitz functions. By considering $-f$ and using the union bound, we obtain the two sided version above for $1$-Lipschitz functions. Apply proposition 1.2 in \cite{ledoux2001concentration} to get the general $\mathcal{L}$-Lipschitz case above. 
\end{proof}

We also state the following useful lemma: 

\begin{lemma} \label{projection_lipschitz}
The projection map $h_{j_1, \ldots j_k}: \R^p \to \R^k$ defined by $(x_1, \ldots x_p) \mapsto (x_{j_1}, \ldots, x_{j_k})$ for any $j_1, \ldots, j_k \in [p]$ is $1$-Lipschitz with respect to the Euclidean norm. 
\end{lemma}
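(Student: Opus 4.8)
The plan is to reduce the claim to the elementary fact that a finite sum of nonnegative terms can only decrease when some terms are dropped. First I would fix arbitrary $x, y \in \R^p$ and expand the squared Euclidean distance between their images:
$$||h_{j_1, \ldots, j_k}(x) - h_{j_1, \ldots, j_k}(y)||_2^2 = \sum_{i=1}^k (x_{j_i} - y_{j_i})^2 = \sum_{j \in \{j_1, \ldots, j_k\}} (x_j - y_j)^2,$$
where the second equality uses that $j_1, \ldots, j_k$ name distinct coordinates, as is the case for a genuine coordinate projection. Since $\{j_1, \ldots, j_k\} \subseteq [p]$ and each summand $(x_j - y_j)^2 \geq 0$, this quantity is bounded above by $\sum_{j=1}^p (x_j - y_j)^2 = ||x - y||_2^2$.

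Taking square roots then gives $||h_{j_1, \ldots, j_k}(x) - h_{j_1, \ldots, j_k}(y)||_2 \leq ||x - y||_2$, and since $x$ and $y$ were arbitrary, the supremum in the definition of the Lipschitz constant is at most $1$, which is the assertion.

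There is essentially no obstacle; the only point worth flagging is the distinctness of the indices. If repetitions among the $j_i$ were permitted, the same computation would yield a Lipschitz constant equal to the square root of the largest multiplicity rather than $1$, so the intended (and, in the paper's usage, operative) hypothesis is that $j_1, \ldots, j_k$ index distinct coordinates.
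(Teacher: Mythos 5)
Your proof is correct and uses essentially the same argument as the paper: expand the squared Euclidean norm of the projected difference as a sum of nonnegative terms, observe that it is a subsum of $\sum_{j=1}^p (x_j - y_j)^2$, and take square roots. Your observation about the implicit distinctness of $j_1,\ldots,j_k$ is a reasonable clarification, but otherwise the two proofs coincide.
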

\begin{proof}
    Follows directly from the fact that $||x - y||_2 = \sqrt{\sum_{i = 1}^p (x_i - y_i)^2} \geq \sqrt{\sum_{q = 1}^k (x_{j_q} - y_{j_q})^2} = ||h_{j_1, \ldots j_k}(x) - h_{j_1, \ldots j_k}(y)||_2$ for any $x, y \in \R^p$. 
\end{proof}
Given a vector in $\R^p$, the projection map above simply selects its $j_1, \ldots, j_k$ components to return a vector in $\R^k$. 


\section{Proofs}
\subsection{Proof of Proposition \ref{finite_lipschitz}}
\begin{proof}
A finite neural network consists of finitely many compositions of affine transformations by $W_l$ and $b_l$ and non-linear activations by $\sigma_l$. 
Since the Lipschitz property is preserved under finite composition, and since $\sigma_l \in S$, it suffices to show that the affine transformations are Lipschitz. The Lipschitz constant of the affine function $g(x) = W_l x + b_l$ is upper bounded by the Frobenius norm $ ||W_l||_F$, which is finite since matrix entries and dimensions are finite.   
\end{proof}

\subsection{Proof of Theorem \ref{gaussian_iso}}

\begin{proof}
We break down function $\hat{f}: \R^d \to \R^p$ into its $p$ component functions $\hat{f}_1, \ldots, \hat{f}_p: \R^d \to \R$,
which are also $\mathcal{L}$-Lipschitz by Lemma \ref{projection_lipschitz}. Focus on $\hat{f}_1$ without loss of generality. Apply the Gaussian isoperimetric inequality to get that $\hat{f}_1(z) - E\{\hat{f}_1(z)\}$ is a sub-Gaussian random variable with Orlicz norm $||\hat{f}_1(z) - E\{\hat{f}_1(z)\}||_{\psi_2} \leq c\mathcal{L}||\Sigma||^{1/2}$ for some $c > 0$. This holds for all $\hat{f}_1, \ldots, \hat{f}_p$. Apply Lemma \ref{subGaussian_components} to see that $\hat{f}(z) - E\{\hat{f}(z)\}$ is a sub-Gaussian random vector with $||u^T[\hat{f}(z) - E\{\hat{f}(z)\}]||_{\psi_2} \leq c\sqrt{p}\mathcal{L}||\Sigma||^{1/2}$ for any $u \in S^{p-1}$. This implies the concentration inequality 
$Pr[u^T |\hat{f}(z) - E\{\hat{f}(z)\}| \geq t] \leq 2\exp\{- t^2/(C^2p\mathcal{L}^2||\Sigma||)\}$ for any unit vector $u \in S^{p - 1}$ and some $C > 0$. Substituting $C_p^2 = C^2p\mathcal{L}^2||\Sigma||$ into the bound yields the desired result.  
\end{proof}


\begin{lemma} \label{subGaussian_components}
Given $p$ sub-Gaussian random variables $x_1, \ldots, x_p$, if $||x_i||_{\psi_2} \leq K$ for some positive $K$ and for any $ 1\leq i \leq p$, then $x = (x_1, \ldots, x_p)$ is sub-Gaussian and $\sup_{u \in S^{p-1}} || u^Tx||_{\psi_2} \leq \sqrt{p}K$.   
\end{lemma}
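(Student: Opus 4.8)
The plan is to reduce the vector statement to a one-dimensional calculation, exploiting that the sub-Gaussian Orlicz norm $||\cdot||_{\psi_2}$ is a genuine norm on the space of centered random variables — in particular, it is subadditive and absolutely homogeneous. This is precisely why one works with the Orlicz-norm characterization rather than directly with the tail bound; the relevant facts are standard (see section 2.5 of \citet{vershynin2018high}). Note that no independence assumption on the $x_i$ is needed.

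First I would fix an arbitrary unit vector $u = (u_1, \ldots, u_p) \in S^{p-1}$ and write the one-dimensional marginal as $u^T x = \sum_{i=1}^p u_i x_i$. Applying the triangle inequality and homogeneity of $||\cdot||_{\psi_2}$ termwise gives $||u^T x||_{\psi_2} \leq \sum_{i=1}^p |u_i|\,||x_i||_{\psi_2} \leq K\sum_{i=1}^p |u_i|$, where the final step invokes the hypothesis $||x_i||_{\psi_2} \leq K$ for every $i$. Next I would bound $\sum_{i=1}^p |u_i|$ in terms of the Euclidean norm: by the Cauchy--Schwarz inequality, $\sum_{i=1}^p |u_i| = \langle (|u_1|, \ldots, |u_p|), \mathbf{1}_p\rangle \leq ||u||_2\,||\mathbf{1}_p||_2 = \sqrt{p}$, since $||u||_2 = 1$. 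Chaining the two estimates yields $||u^T x||_{\psi_2} \leq \sqrt{p}\,K$, and taking the supremum over $u \in S^{p-1}$ gives both the claimed bound and, since $\sqrt{p}\,K < \infty$, the sub-Gaussianity of the random vector $x$ directly from the definition of a sub-Gaussian random vector.

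I do not anticipate a genuine obstacle here: the proof is routine, and the only point requiring care is the use of the triangle inequality for $||\cdot||_{\psi_2}$, which presupposes it is indeed a norm. If one wished to avoid quoting that fact, an alternative route is to bound the moment generating function of $u^T x$ directly, applying H\"older's inequality across the $p$ coordinates and then optimizing; but the Orlicz-norm argument above is cleaner and is the one I would present.
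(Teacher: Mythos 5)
Your proof is correct and follows essentially the same path as the paper's: decompose $u^T x$ as $\sum_i u_i x_i$, apply the triangle inequality and homogeneity of $\|\cdot\|_{\psi_2}$ to get $\|u^T x\|_{\psi_2} \leq K\|u\|_1$, then bound $\|u\|_1 \leq \sqrt{p}\,\|u\|_2$. The only cosmetic difference is that you derive the $\ell_1$--$\ell_2$ comparison from Cauchy--Schwarz explicitly, whereas the paper simply cites the inequality.
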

\begin{proof}
Simply observe that $\sup_{u \in S^{p-1}} || u^Tx||_{\psi_2}=\sup_{u \in S^{p-1}} || \sum_{i = 1}^p u_i x_i||_{\psi_2} \leq \sup_{u \in S^{p-1}}  \sum_{i = 1}^p (|u_i| \cdot ||  x_i||_{\psi_2}) \leq K \sup_{u \in S^{p-1}} ||u||_1 \leq \sqrt{p}K$. Here the first inequality is due to the triangle inequality and homogeneity, the last inequality is due to the inequality $||u||_1 \leq \sqrt{p}||u||_2$.
\end{proof}

\subsection{Proof of Theorem \ref{logconcave_iso}}

\begin{proof}
We again break down $\hat{f}: \R^d \to \R^p$ into $p$ component functions $\hat{f}_1, \ldots, \hat{f}_p: \R^d \to \R$, which are also $\mathcal{L}$-Lipschitz by Lemma \ref{projection_lipschitz}. Focus on $\hat{f}_1$ without loss of generality. Apply the log-concave concentration inequality in Lemma \ref{logconcave_lipschitz} to get that $\hat{f}_1(z) - E(\hat{f}_1(z))$ is a sub-exponential random variable with Orlicz norm $||\hat{f}_1(z) - E(\hat{f}_1(z))||_{\psi_1} \leq c\mathcal{L}||\Sigma^{1/2}||/\Psi_z$ for some $c > 0$. Apply Lemma \ref{subexponential_components} to see that $\hat{f}(z) - E\{\hat{f}(z)\}$ is a sub-exponential random vector with $\sup_{u \in S^{p-1}}||u^T[\hat{f}(z) - E\{\hat{f}(z)\}]||_{\psi_1} \leq c\sqrt{p}\mathcal{L}||\Sigma^{1/2}||/\Psi_z$. This implies the concentration inequality 
$Pr[u^T |\hat{f}(z) - E\{\hat{f}(z)\}| \geq t] \leq 2\exp\{- \Psi_z t/(C\sqrt{p}\mathcal{L}||\Sigma^{1/2}||)\}$ for any unit vector $u \in S^{p - 1}$ and some  $C > 0$. Substituting $C_p = C\sqrt{p}\mathcal{L}||\Sigma^{1/2}||/\Psi_z$ into the bound yields the desired result.  
\end{proof}

\begin{lemma} \label{subexponential_components}
Given $p$ sub-exponential random variables $x_1, \ldots, x_p$, if $||x_i||_{\psi_1} \leq K$ for some positive $K$ and for any $ 1\leq i \leq p$, then $x = (x_1, \ldots, x_p)$ is sub-exponential and $\sup_{u \in S^{p-1}} || u^Tx||_{\psi_1} \leq \sqrt{p}K$.   
\end{lemma}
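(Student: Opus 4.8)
The plan is to mirror exactly the argument used for Lemma \ref{subGaussian_components}, replacing the $\psi_2$ Orlicz norm by the $\psi_1$ Orlicz norm throughout. The only structural facts I need are that $\|\cdot\|_{\psi_1}$ is genuinely a norm on the space of sub-exponential random variables — in particular that it satisfies the triangle inequality and is positively homogeneous, $\|a Y\|_{\psi_1} = |a|\,\|Y\|_{\psi_1}$ — which is standard (see section 2.7 of \citet{vershynin2018high}), together with the elementary inequality $\|u\|_1 \le \sqrt{p}\,\|u\|_2$ that already appears in the proof of Lemma \ref{subGaussian_components}.

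Concretely, fix any $u \in S^{p-1}$ and write $u^T x = \sum_{i=1}^p u_i x_i$. First I would apply the triangle inequality for $\|\cdot\|_{\psi_1}$ to get $\|u^T x\|_{\psi_1} \le \sum_{i=1}^p \|u_i x_i\|_{\psi_1}$, then use homogeneity to rewrite each term as $|u_i|\,\|x_i\|_{\psi_1}$, and finally invoke the hypothesis $\|x_i\|_{\psi_1} \le K$ to obtain $\|u^T x\|_{\psi_1} \le K \sum_{i=1}^p |u_i| = K\,\|u\|_1$. Applying $\|u\|_1 \le \sqrt{p}\,\|u\|_2 = \sqrt{p}$ (Cauchy--Schwarz, since $u$ is a unit vector) yields $\|u^T x\|_{\psi_1} \le \sqrt{p}\,K$, and taking the supremum over $u \in S^{p-1}$ gives the claimed bound. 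Since this supremum is finite, $x$ is sub-exponential by the definition of a sub-exponential random vector.

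I do not anticipate a genuine obstacle here: the whole content is the norm property of $\|\cdot\|_{\psi_1}$, which is quoted earlier, plus a one-line norm comparison. If anything, the only point worth stating carefully is that the triangle inequality and homogeneity being used are those of the $\psi_1$ Orlicz \emph{norm} (as opposed to working directly with tail bounds, where constants would proliferate); invoking the Orlicz-norm characterization of sub-exponentiality from the preliminaries sidesteps that entirely, so the argument is essentially verbatim the one for the sub-Gaussian case.
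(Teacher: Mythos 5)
Your argument is exactly the paper's: apply the triangle inequality and homogeneity of the $\psi_1$ Orlicz norm to bound $\|u^T x\|_{\psi_1}$ by $K\|u\|_1$, then use $\|u\|_1 \le \sqrt{p}\,\|u\|_2 = \sqrt{p}$. This is correct and matches the paper's proof of Lemma \ref{subexponential_components} essentially verbatim.
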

\begin{proof}
Simply observe that $\sup_{u \in S^{p-1}} || u^Tx||_{\psi_1}=\sup_{u \in S^{p-1}} || \sum_{i = 1}^p u_i x_i||_{\psi_1} \leq \sup_{u \in S^{p-1}}  \sum_{i = 1}^p (|u_i| \cdot ||  x_i||_{\psi_1}) \leq  K \sup_{u \in S^{p-1}} ||u||_1 \leq \sqrt{p}K$. Here the first inequality is due to the triangle inequality and homogeneity, the last inequality is due to the inequality $||u||_1 \leq \sqrt{p}||u||_2$.
\end{proof}

\subsection{Proof of Theorem \ref{strong_logconcave_iso}}

\begin{proof}
We yet again break $\hat{f}: \R^d \to \R^p$ into component functions $\hat{f}_1, \ldots, \hat{f}_p: \R^d \to \R$. Each of which is also $\mathcal{L}$-Lipschitz by lemma \ref{projection_lipschitz}. Focus on $\hat{f}_1$ without loss of generality. The strong log-concave concentration inequality implies $\hat{f}_1(z) - E\{\hat{f}_1(z)\}$ is a sub-Gaussian random variable with Orlicz norm $||\hat{f}_1(z) - E\{\hat{f}_1(z)\}||_{\psi_2} \leq c\mathcal{L}||\Sigma||^{1/2}/\sqrt{\gamma}$ for some $c > 0$. This holds for all component functions $\hat{f}_1, \ldots, \hat{f}_p$. From Lemma \ref{subGaussian_components} $\hat{f}(z) - E\{\hat{f}(z)\}$ is a sub-Gaussian random vector with $||u^T[\hat{f}(z) - E\{\hat{f}(z)\}]||_{\psi_2} \leq c\sqrt{p}\mathcal{L}||\Sigma||^{1/2}/\sqrt{\gamma}$ for any $u \in S^{p-1}$. This implies the concentration inequality 
$Pr[u^T |\hat{f}(z) - E\{\hat{f}(z)\}| \geq t] \leq 2\exp\{- \gamma t^2/(C^2p\mathcal{L}^2||\Sigma||)\}$ for any unit vector $u \in S^{p - 1}$ and some $C > 0$. Substituting $C_{p, \gamma}^2 = C^2\sqrt{p}\mathcal{L}^2||\Sigma||/\gamma$ into the bound yields the desired result.  
\end{proof}

\subsection{Proof of Theorem \ref{manifold_iso}}

\begin{proof}
For ease of notation use $h$ to denote the function $\hat{f}\circ \phi: M \to \R^p$. Since $\hat{f}$ and $\phi$ are  $\mathcal{L}$- and $\mathcal{L}_\phi$- Lipschitz respectively, by composition of Lipschitz functions $h$ is $\mathcal{L}\mathcal{L}_\phi$-Lipschitz. 
Break down the function $h: M \to \R^p$ into its $p$ component functions $h_1, \ldots, h_p$, where each component maps $M \to \R$ and is also $\mathcal{L}\mathcal{L}_\phi$ Lipschitz with respect to the geodesic distance on $M$ by Lemma \ref{projection_lipschitz}. We first focus on $h_1$ without loss of generality. Apply Theorem \ref{gromov_levy} to get that $h_1(z) - E\{h_1(z)\}$ is a sub-Gaussian random variable with Orlicz norm $||h_1(z) - E\{h_1(z)\}||_{\psi_2} \leq c\mathcal{L}\mathcal{L}_\phi/\sqrt{\lambda}$ for some $c > 0$. This holds for all component functions $h_1, \ldots, h_p$. By Lemma \ref{subGaussian_components} $\hat{f}(z) - E\{\hat{f}(z)\}$ is a sub-Gaussian random vector with $||u^T[\hat{f}(z) - E\{\hat{f}(z)\}]||_{\psi_2} \leq c\sqrt{p}\mathcal{L}\mathcal{L}_\phi/\sqrt{\lambda}$ for any $u \in S^{p-1}$. This implies the concentration inequality 
$Pr[u^T |\hat{f}(z) - E\{\hat{f}(z)\}| \geq t] \leq 2\exp\{- \lambda t^2/(C^2p\mathcal{L}^2\mathcal{L}_\phi^2)\}$ for any unit vector $u \in S^{p - 1}$ and some $C > 0$. Substituting $C_{p, \lambda}^2 = C^2p\mathcal{L}^2\mathcal{L}_\phi^2/\lambda$ into the bound yields the desired result.  
\end{proof}

\section{Reduction Argument for Diffusion Models and Proof} \label{reduction}
The goal of the reduction is to turn the sequence of transformations dictated by the iterative step described in equation \eqref{iterative} into a single transformation. More precisely, we want to write the sample $X_0$ as a single Lipschitz transformation of a standard Gaussian random vector in order to derive concentration results using the Gaussian isoperimetric inequality. The complication is that Gaussian noise $\epsilon_\tau\sigma_\tau$ is added at each time step $\tau > 1$, so we cannot directly write $X_0$ as a deterministic Lipschitz transformation of $X_T = z \sim N_p(0, I)$ due to the extra randomness that we have to account for. 

The key insight is to realize that we can write $X_0$ as a single deterministic Lipschitz transformation of the augmented Gaussian random vector $(X_T, \epsilon_1, \ldots, \epsilon_T)$. While this is a $N_{p(T + 1)}(0, I)$ random vector, the large dimension $p(T+1)$ does not directly enter the subsequent bound due to the dimension-free nature of the Gaussian isoperimetric inequality. We detail these steps below. 

Define an auxiliary process $(Y_\tau)_{\tau = 0}^T = \{(X_\tau, \epsilon_1, \ldots, \epsilon_T)\}_{\tau = 0}^T$ where we concatenate the original process $X_\tau$ with the entire noise sequence. At time $T$, $Y_T$ is a $N_{p(T + 1)}(0, I)$ random vector. As $\tau$ evolves from $T$ to $0$, only the first $p$ components of $Y_\tau$ corresponding to $X_\tau$ are updated, while the rest of the components corresponding to noise remain unchanged. We use superscripts such as $Y^{1:p}_{\tau}$ to denote the first $p$ components of $Y_\tau$. 
We now rewrite the iterative step in equation $\eqref{iterative}$ as follows:
\begin{align*}
Y_{\tau - 1} = J_\tau(Y_{\tau}) + K_\tau(Y_{\tau}),
\end{align*}
where $J_\tau$ and $K_\tau$ are functions mapping from $\R^{p(T + 1)}$ to $\R^{p(T + 1)}$. We write $J_\tau$ as $(J^{1:p}_\tau, \textnormal{Id}_{pT})$ and $K_\tau$ as $(K^{1:p}_\tau, \textnormal{Id}_{pT})$ to emphasize that $J_\tau$ and $K_\tau$ only act on the first $p$ components. $J^{1:p}_\tau$ and $K^{1:p}_\tau$ map from $\R^{p(T + 1)}$ to $\R^p$ while function
$\textnormal{Id}_{pT}:\R^{p(T + 1)}\to \R^{pT}$ is defined by the mapping $\{x_1, \ldots, x_{p(T + 1)}\} \mapsto \{x_{p + 1}, \ldots, x_{p(T + 1)}\}$, and hence $J_\tau$ and $K_\tau$ are mappings $\{x_1, \ldots, x_{p(T + 1)}\} \mapsto [J^{1:p}_\tau\{x_1, \ldots, x_{p(T + 1)}\}, x_{p + 1}, \ldots, x_{p(T + 1)}]$ and $[K^{1:p}_\tau\{x_1, \ldots, x_{p(T + 1)}\}, x_{p + 1}, \ldots, x_{p(T + 1)}]$ respectively. On the first $1:p$ components, $Y^{1:p}_\tau$ is updated by 
\begin{align*}
        Y^{1:p}_{\tau - 1} = J^{1:p}_\tau(Y_{\tau}) + K^{1:p}_\tau(Y_{\tau}),
\end{align*}
where $J_\tau^{1:p}(Y_{\tau}) = (1/\sqrt{\alpha_\tau})\{X_{\tau} - (1-\alpha_\tau)/\sqrt{1-\bar{\alpha}_\tau}\hat{f}(X_{\tau}, \tau)\}$ is a function that depends on only the first $p$ components of $Y_\tau$, which are $X_\tau$, and $K_\tau^{1:p}(Y_{\tau}) = \epsilon_\tau \sigma_{\tau} \text{1}_{\tau > 1}$ is a function that depends on only the components of $Y_\tau$ that corresponds to $\epsilon_\tau$. 

We analyze Lipschitz properties of these functions. Here $\tau$, $\alpha_\tau$, $\bar{\alpha}_\tau$, $\sigma_\tau$ are deterministic quantities. $J_\tau^{1:p}$ is simply the addition of a scaled $X_\tau$ to a scaled finite neural network function. By Lemma \ref{augmentation_lipschitz}, $\hat{f}(X_{\tau}, \tau)$ is a Lipschitz function of $X_\tau$ since $\hat{f}$ is a finite neural network. Since the Lipschitz property is closed under addition and finite scaling, $J_\tau^{1:p}$ is then a Lipschitz function of $X_\tau$. By Lemma \ref{projection_lipschitz}, $J_\tau^{1:p}$ is a Lipschitz function of $Y_\tau$. By Lemma \ref{ID_lipschitz}, $J_\tau$ is a Lipschitz function of $Y_\tau$. 
$K_\tau^{1:p}$ is simply a scaling of $\epsilon_\tau$, and hence is a Lipschitz function of $\epsilon_\tau$. By Lemma \ref{projection_lipschitz}, $K_\tau^{1:p}$ is a Lipschitz function of $Y_\tau$. By Lemma \ref{ID_lipschitz}, $K_\tau$ is a Lipschitz function of $Y_\tau$. 
Hence $J_\tau$, $K_\tau$ and consequently $J_\tau + K_\tau$ are Lipschitz functions of $Y_\tau$. 

Let $H_\tau = J_\tau + K_\tau$ with Lipschitz constant $\mathcal{L}_\tau$. 
Observe that $Y_0$ is equal to $H_1[H_2 \{\ldots H_T(Y_T)\ldots\}]$, and the overall transformation is $\prod_{\tau = 1}^T \mathcal{L}_\tau$ Lipschitz. By Lemma \ref{projection_lipschitz}, $X_0$ is then also a Lipschitz function of $Y_T$ with the same Lipschitz constant. 
We have thus shown that $X_0$ can be written as a $\prod_{\tau = 1}^T \mathcal{L}_\tau$-Lipschitz function of $Y_T$, a $N_{(T + 1)p}(0, I)$ random vector. We are ready to state the proof of Theorem \ref{diffusion}.

\subsection{Proof of Theorem \ref{diffusion}}

\begin{proof}
We have demonstrated above that $X_0$ can be written as a $\prod_{\tau = 1}^T \mathcal{L}_\tau$-Lipschitz function of $Y_T$, a $N_{(T + 1)p}(0, I)$ random vector. Use $h$ to denote this function, so $X_0 = h(Y_T)$. 
Break down the function $h: \R^{p(T + 1)}\to \R^p$ into $p$ component functions $h_1, \ldots, h_p$, where each component is a function from $\R^{p(T + 1)}$ to $\R$ that is also $\prod_{\tau = 1}^T \mathcal{L}_\tau$ Lipschitz by Lemma \ref{projection_lipschitz}. Focus on $h_1$ without loss of generality. Apply the Gaussian isoperimetric inequality to get that $h_1(Y_T) - E\{h_1(Y_T)\}$ is a sub-Gaussian random variable with Orlicz norm $||h_1(Y_T) - E\{h_1(Y_T)\}||_{\psi_2} \leq c\prod_{\tau = 1}^T \mathcal{L}_\tau$ for some $c > 0$. This holds for all component functions $h_1, \ldots, h_p$. By Lemma \ref{subGaussian_components}  $h(Y_T) - E\{h(Y_T)\}$ is a sub-Gaussian random vector with $||u^T[h(Y_T) - E\{h(Y_T)\}]||_{\psi_2} \leq c\sqrt{p}\prod_{\tau = 1}^T \mathcal{L}_\tau$ for any $u \in S^{p-1}$. This implies the concentration inequality 
$Pr[u^T |h(Y_T) - E\{h(Y_T)\}| \geq t] \leq 2\exp\{-t^2/(C^2p\prod_{\tau = 1}^T \mathcal{L}^2_\tau)\}$ for any unit vector $u \in S^{p - 1}$ and some $C > 0$. Substituting $C_{p}^2 = C^2p\prod_{\tau = 1}^T \mathcal{L}^2_\tau$ into the bound yields the desired result.  
\end{proof}

We also state the following useful lemmas: 
\begin{lemma} \label{augmentation_lipschitz}
The map $h_{\tau}: \R^p \to \R^{p + 1}$ defined by $(x_1, \ldots, x_p) \mapsto (x_1, \ldots, x_p, \tau)$ for some fixed deterministic real number $\tau$ is $1$-Lipschitz with respect to the Euclidean norm. 
\end{lemma}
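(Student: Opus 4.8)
The plan is to prove that the augmentation map $h_\tau\colon\R^p\to\R^{p+1}$, $(x_1,\dots,x_p)\mapsto(x_1,\dots,x_p,\tau)$, is $1$-Lipschitz by a direct computation of the Euclidean distance between two images, exactly in the style of the proof of Lemma \ref{projection_lipschitz}. First I would fix arbitrary $x=(x_1,\dots,x_p)$ and $y=(y_1,\dots,y_p)$ in $\R^p$ and write out $h_\tau(x)$ and $h_\tau(y)$ explicitly, noting that they agree in the last coordinate because $\tau$ is a fixed deterministic constant.

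Next I would compute
$$\|h_\tau(x)-h_\tau(y)\|_2 = \sqrt{\sum_{i=1}^p (x_i-y_i)^2 + (\tau-\tau)^2} = \sqrt{\sum_{i=1}^p (x_i-y_i)^2} = \|x-y\|_2,$$
so in fact the map is an isometry onto its image, which trivially gives the Lipschitz constant $1$ (and shows it cannot be improved). Dividing through by $\|x-y\|_2$ when $x\neq y$ gives the ratio bound in the definition of Lipschitz; the case $x=y$ is trivial.

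There is essentially no obstacle here: the only thing to be careful about is that $\tau$ is the same constant in both images, so the appended coordinate contributes zero to the squared distance — this is the crux, and it is immediate. The lemma is the exact analogue of Lemma \ref{projection_lipschitz} with "deleting coordinates" replaced by "appending a fixed coordinate," and it is used in the diffusion reduction to justify that feeding the deterministic time index $\tau$ into $\hat f(X_\tau,\tau)$ does not spoil the Lipschitz bookkeeping, since $\hat f$ as a finite neural network is Lipschitz on $\R^{p+1}$ and composition with $h_\tau$ preserves this.
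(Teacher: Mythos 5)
Your proof is correct and is essentially the paper's own argument: a direct computation showing $\|h_\tau(x)-h_\tau(y)\|_2 = \|x-y\|_2$ because the appended coordinate $\tau$ is fixed and contributes $(\tau-\tau)^2 = 0$ to the squared distance. The additional observation that the map is actually an isometry onto its image is a nice touch but not needed for the Lipschitz claim.
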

\begin{proof}
     $||x - y||_2 = \sqrt{\sum_{i = 1}^p (x_i - y_i)^2} = \sqrt{\sum_{i = 1}^p (x_i - y_i)^2 + (\tau - \tau)^2} = ||h_\tau(x) - h_\tau(y)||_2$ for any $x, y \in \R^p$. 
\end{proof}

\begin{lemma} \label{ID_lipschitz}
If the map $h: \R^{p + k} \to \R^p$ is $\mathcal{L}$-Lipschitz with respect to the Euclidean norm, then the map $(h, \textnormal{Id}_k): \R^{p + k} \to \R^{p + k}$ defined by $(x_1, \ldots, x_{p + k}) \mapsto \{h(x_1, \ldots, x_{p + k}), x_{p + 1}, \ldots, x_{p + k}\}$ is $(\mathcal{L} + 1)$-Lipschitz for any integer $k \geq 0$.  
\end{lemma}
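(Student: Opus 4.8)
The plan is to argue directly from the definition of the Lipschitz constant. Fix arbitrary points $x, y \in \R^{p+k}$ and write $g = (h, \textnormal{Id}_k)$. Since the first $p$ coordinates of $g(x)$ are $h(x)$ and the last $k$ coordinates are $(x_{p+1}, \ldots, x_{p+k})$, the squared Euclidean norm of $g(x) - g(y)$ splits as a sum over these two coordinate blocks:
\[
\|g(x) - g(y)\|_2^2 = \|h(x) - h(y)\|_2^2 + \sum_{i=p+1}^{p+k}(x_i - y_i)^2 .
\]

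Next I would bound each block separately. The first term is at most $\mathcal{L}^2\|x-y\|_2^2$ by the hypothesis that $h$ is $\mathcal{L}$-Lipschitz. For the second term, observe that $\sum_{i=p+1}^{p+k}(x_i-y_i)^2$ is the squared Euclidean norm of the image of $x-y$ under the coordinate projection onto the last $k$ coordinates, which is $1$-Lipschitz by Lemma \ref{projection_lipschitz}; hence this term is at most $\|x-y\|_2^2$. Adding the two bounds gives $\|g(x)-g(y)\|_2^2 \le (\mathcal{L}^2+1)\|x-y\|_2^2$, so $g$ is $\sqrt{\mathcal{L}^2+1}$-Lipschitz, and since $\sqrt{\mathcal{L}^2+1} \le \sqrt{\mathcal{L}^2 + 2\mathcal{L} + 1} = \mathcal{L}+1$, the claimed bound follows. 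The degenerate case $k=0$ is immediate, since then $g = h$ and $\mathcal{L} \le \mathcal{L}+1$.

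There is no genuine obstacle here; this is a routine computation. The only points requiring a little care are keeping track of which coordinate blocks map where, and invoking Lemma \ref{projection_lipschitz} to justify that restricting to a subset of coordinates does not increase the Euclidean norm rather than re-deriving it. One could instead state the sharper constant $\sqrt{\mathcal{L}^2+1}$, but the looser value $\mathcal{L}+1$ already suffices for the product-of-Lipschitz-constants estimate used in the diffusion reduction, so I would keep the statement as is.
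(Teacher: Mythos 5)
Your proof is essentially identical to the paper's: both split $\|g(x)-g(y)\|_2^2$ into the $h$-block and the identity-block and bound each by $\mathcal{L}^2\|x-y\|_2^2$ and $\|x-y\|_2^2$ respectively. One small point in your favor: you correctly note that this yields the sharper constant $\sqrt{\mathcal{L}^2+1}$ and that an additional inequality $\sqrt{\mathcal{L}^2+1}\le\mathcal{L}+1$ is needed, whereas the paper's displayed chain writes $\sqrt{\mathcal{L}^2\|x-y\|_2^2+\|x-y\|_2^2}=(\mathcal{L}+1)\|x-y\|_2$ with an equals sign where an inequality belongs.
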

\begin{proof}
The case of $k = 0$ is trival. 
For $k \geq 1$, simply note that $$||\{h(x_1, \ldots, x_{p + k}), x_{p + 1}, \ldots, x_{p + k}\} - \{h(y_1, \ldots, y_{p + k}), y_{p + 1}, \ldots, y_{p + k}\}||_2$$
$$ = \sqrt{||h(x_1, \ldots, x_{p + k}) - h(y_1, \ldots, y_{p + k})||_2^2 + \sum_{i = p + 1}^{p + k} (x_i - y_i)^2}$$ 
$$ \leq \sqrt{\mathcal{L}^2||x - y||_2^2 + ||x - y||_2^2} = (\mathcal{L} + 1)||x - y||_2$$

for any $x = (x_1, \ldots, x_{p + 1}), y = (y_1, \ldots, y_{p + k}) \in \R^{p+k}$. 
\end{proof}

\section{Implementation details of Simulations and Experiments}\label{implementation}

Code can be found at \url{https://www.github.com/edrictam/generative_capacity}. All figures in this article are generated via Python notebooks executed in Google Colaboratory. 
\subsection{Details on Generative Adversarial Networks}

In Figure \ref{fig:cauchy}, generative adversarial networks are trained on the bivariate Cauchy data. The discriminator neural network has four fully-connected layers with widths $2, 256, 128, 1$. The generator neural network has four layers with widths $64, 128, 256, 2$. All activation functions are rectified linear units, except the last layer of the discriminator network, which has sigmoidal activation for binary classification. The generative adversarial network is trained over $500$ epochs with batch size $100$ using the Adam optimizer. Learning rates for both networks are set to $0.0002$. Standard binary cross entropy losses are employed. Latent variables used for sample generation follow a $64$-dimensional standard normal distribution. 

In Figure \ref{fig:sensitivity}, we show  samples from four additional generative adversarial networks. These generative adversarial networks share the same architecture and training setup as the network in Figure \ref{fig:cauchy}, except with different number of layers and latent variable dimensions. We used 6-layer and 8-layer generators and discriminators in Figures \ref{fig:deeper} and \ref{fig:deepest} respectively. Here, additional layers are fully-connected with width $256$. We used $32$-dimensional and $128$-dimensional standard Gaussian latent variables in \ref{fig:narrow} and \ref{fig:wide} respectively. 

\subsection{Details on Denoising Diffusion Model}

We trained a denoising diffusion model on the bivariate Cauchy training data. Here, the noise prediction network has four fully-connected layers with dimensions $2 + 1, 128, 128, 2$.  All activation functions are rectified linear units. The network is trained using the Adam optimizer with learning rate $0.001$ over $1000$ epochs with batch size $128$. The number of time steps for the diffusion model is $1000$, with variance schedule $\beta_1, \cdots, \beta_{1000}$ set to be an arithmetic sequence starting at $0.0001$ with increment $0.02$. 

\subsection{Financial Data}
Price data for the Standard and Poor's $500$ as well as the Dow Jones Industrial Average indices were obtained from Yahoo Finance for the period from the first of January, 2008 to the twelfth of April, 2024. Daily closing prices are transformed into daily returns in basis points, yielding a total of $4096$ $2$-dimensional data points. A generative adversarial network is trained on these data. The discriminator neural network has four layers with widths $2, 256, 128, 1$. The generator neural network has four layers with widths $64, 128, 256, 2$. All activation functions are rectified linear units, except the last layer of the discriminator network, which has sigmoidal activation for binary classification. The generative adversarial network is trained over $200$ epochs with batch size $64$ using the Adam optimizer. Learning rate is set to $0.0001$ for the generator network and $0.00005$ for the discriminator network. Standard binary cross entropy losses are employed. Latent variables used for sample generation follow a $64$-dimensional standard normal distribution. 

\section{Additional Figures for Simulations}
\begin{figure} 
\centering
    \begin{subfigure}[t]{0.47\textwidth}
        \centering
        \includegraphics[width=1\textwidth]{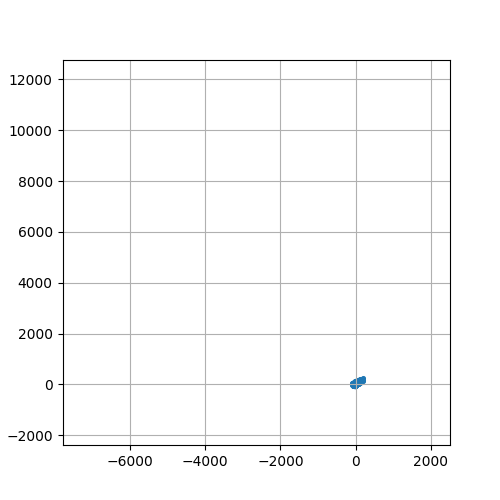}
        \caption{Samples from fitted generative adversarial network (8 layers, 64 latent variables)}
        \label{fig:deepest}
    \end{subfigure}\;
    \begin{subfigure}[t]{0.47\textwidth}
        \centering
        \includegraphics[width=1\textwidth]{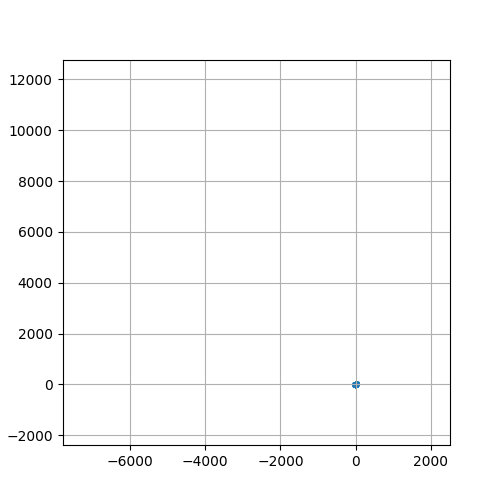}
        \caption{Samples from fitted generative adversarial network (6 layers, 64 latent variables)}
        \label{fig:deeper}
    \end{subfigure}\;
        \begin{subfigure}[t]{0.47\textwidth}
        \centering
        \includegraphics[width=1\textwidth]{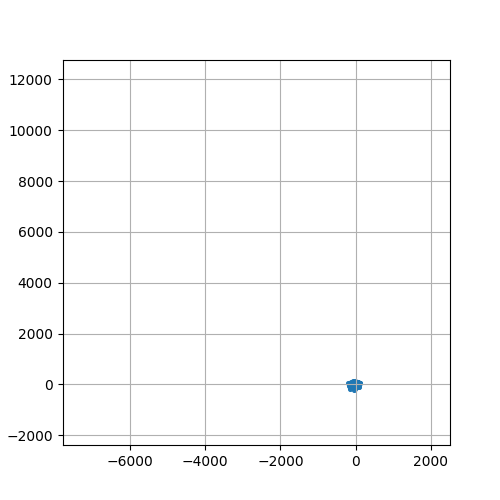}
        \caption{Samples from fitted generative adversarial network (4 layers, 32 latent variables)}
        \label{fig:narrow}
    \end{subfigure}\;
    \begin{subfigure}[t]{0.47\textwidth}
        \centering
        \includegraphics[width=1\textwidth]{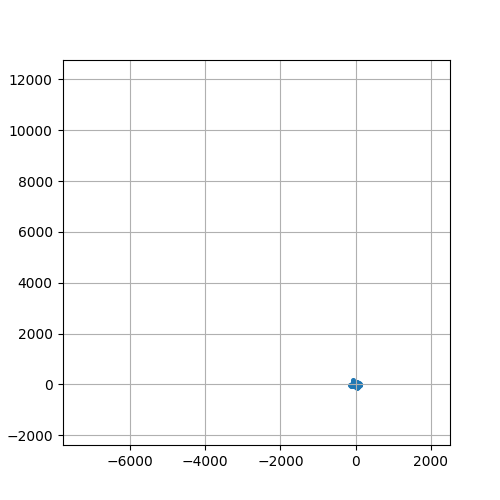}
        \caption{Samples from fitted generative adversarial network (4 layers, 128 latent variables)}
        \label{fig:wide}
    \end{subfigure}\;
    \caption{Synthetic samples from generative adversarial networks with varying depth and latent variable dimensions}    \label{fig:sensitivity}
\end{figure}

\begin{figure} 
    \centering
        \begin{subfigure}[t]{0.47\textwidth}
        \centering
        \includegraphics[width=1\textwidth]{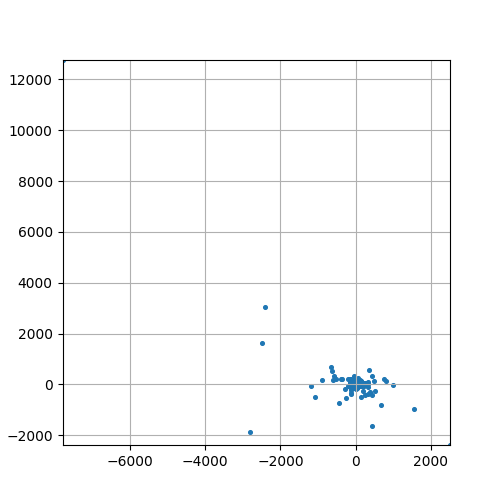}
        \caption{Bivariate Cauchy training samples}
    \end{subfigure}\;
    \begin{subfigure}[t]{0.47\textwidth}
        \centering
        \includegraphics[width=1\textwidth]{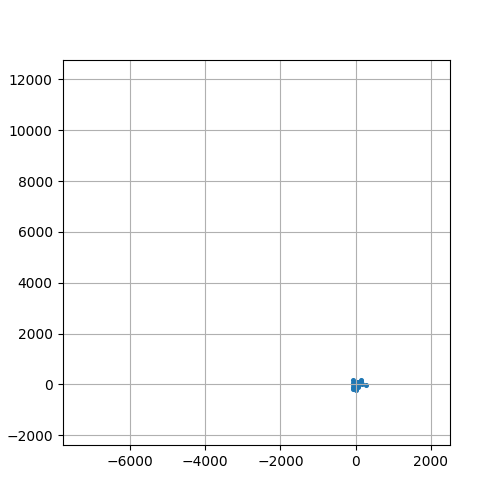}
        \caption{Samples from denoising diffusion model}
    \end{subfigure}\;
        \caption{Comparisons between Cauchy samples and synthetic samples from a denoising diffusion model.}    \label{fig:diffusion}
\end{figure}

\clearpage
\bibliographystyle{biometrika}
\bibliography{ref}

\end{document}